\documentclass{article}

\usepackage[final,main]{neurips_2026}
\usepackage{url}
\usepackage[linesnumbered,ruled,vlined]{algorithm2e}
\usepackage[utf8]{inputenc}
\usepackage[T1]{fontenc} 
\usepackage{graphicx}
\usepackage{amsmath}
\usepackage{amsfonts}
\usepackage{amsthm}
\usepackage{nicefrac}       
\usepackage{microtype}      
\usepackage{booktabs}
\usepackage{mathtools} 
\usepackage{multirow}
\usepackage{wrapfig}
\usepackage{array}
\usepackage{tikz,pgfplots}
\usepackage{subcaption}
\usepackage{framed}
\usepackage{siunitx}
\usepackage{xcolor}
\usepackage{xspace}
\usepackage{pgfplots}
\pgfplotsset{compat=1.18}
\usepackage{tikz}
\usepackage{pgfplotstable}
\usepgfplotslibrary{groupplots}

\usepackage{hyperref}

\usepackage[capitalize,noabbrev]{cleveref}

\theoremstyle{plain}
\newtheorem{theorem}{Theorem}[section]

\newtheorem{lemma}[theorem]{Lemma}

\theoremstyle{definition}

\newtheorem{assumption}[theorem]{Assumption}
\theoremstyle{remark}

\usepackage[textsize=tiny]{todonotes}

\pgfplotsset{compat=1.18}

\newcolumntype{P}[1]{>{\centering\arraybackslash}p{#1}}

\def\framework{\texttt{AeroChem}\xspace}
\definecolor{myred}{HTML}{D62728}    
\definecolor{myblue}{HTML}{1F77B4}   
\definecolor{myorange}{HTML}{FF7F0E} 
\definecolor{mygreen}{HTML}{2CA02C}  
\definecolor{mypurple}{HTML}{9467BD} 
\definecolor{mygray}{HTML}{7F7F7F}   
\newcommand{\Sm}{\mathbf{S}}
\newcommand{\Sprior}{\Sm_{t}^{-}}
\newcommand{\Spost}{\Sm_{t}^{+}}
\newcommand{\Spostprev}{\Sm_{t-1}^{+}}
\newcommand{\Pprior}{\mathbf{P}_{t}^{-}}
\newcommand{\Ppost}{\mathbf{P}_{t}^{+}}
\newcommand{\Zobs}{\mathbf{Z}_{\mathrm{obs},t}}
\newcommand{\Robs}{\mathbf{R}_{\mathrm{obs},t}}
\newcommand{\Kt}{\mathbf{K}_{t}}

\newcommand{\Fdrift}{\mathbf{F}_{\mathrm{drift}}}

\newcommand{\eps}{\boldsymbol{\epsilon}}

\newcommand{\E}{\mathbb{E}}
\newcommand{\R}{\mathbb{R}}

\newcommand{\norm}[1]{\lVert #1 \rVert}
\newcommand{\Phistep}{\Phi_{\Delta t}}
\newcommand{\err}{\mathbf{e}_{t}}
\newcommand{\errprev}{\mathbf{e}_{t-1}}
\newcommand{\wt}{\mathbf{w}_{t}}
\newcommand{\nut}{\boldsymbol{\nu}_{t}}
\newcommand{\nutbar}{\bar{\boldsymbol{\nu}}_{t}}
\newcommand{\nuttilde}{\tilde{\boldsymbol{\nu}}_{t}}

\newcommand{\Filt}{\mathcal{F}}

\newcommand{\LinOp}{\mathrm{Linear}}
\newcommand{\Strue}{\Sm_{t}^{*}}
\newcommand{\Strueprev}{\Sm_{t-1}^{*}}

\title{\framework: Closed-loop Physics-Informed State Space Modeling for Long-term Chemically-Reactive \\Air Quality Forecasting}

\author{%
  Kien Le$^{*}$ \\
  VinUniversity\\
  Vietnam \\
  {\small\texttt{kien.lt@vinuni.edu.vn}} \\
  \And
  Tung Kieu$^{*}$ \\
  Aalborg University\\
  Denmark \\
  {\small\texttt{tungkvt@cs.aau.dk}} \\
  \And
  Dinh Duc Nha Nguyen \\
  CECS, VinUniversity\\
  Vietnam \\
  {\small\texttt{nha.ndd@vinuni.edu.vn}} \\
  \AND
  Cuong Do \\
  CECS, VinUniversity\\
  Vietnam \\
  {\small\texttt{cuong.dd@vinuni.edu.vn}} \\
  \And
  Bin Yang \\
  East China Normal University\\
  China \\
  {\small\texttt{byang@dase.ecnu.edu.cn}} \\
  \And
  Nguyen Phi Long$^{\dagger}$ \\
  {\small SEEE, Hanoi University of Industry}\\
  Vietnam \\
  {\small\texttt{long.nguyen04@haui.edu.vn}}
}

\begin{document}

\maketitle
\renewcommand{\thefootnote}{\fnsymbol{footnote}}
\footnotetext[1]{Equal contribution.}
\footnotetext[2]{Corresponding author.}
\renewcommand{\thefootnote}{\arabic{footnote}}
\setcounter{footnote}{0}


\begin{abstract}
Accurate long-term air quality forecasting is challenging due to long-range transport, meteorological variability, and nonlinear chemical reactions underlying secondary pollutant formation. Purely data-driven models often suffer from long-horizon drift and lack physical consistency, while existing physics-informed approaches commonly follow open-loop designs that fuse neural and physical components only at the output level. We propose \framework, a closed-loop physics-informed state space framework for chemically reactive air quality forecasting. \framework adopts a two-branch co-correction design: a UDE-based process branch provides a structured but imperfect physical prior, while a Mamba-based virtual measurement branch captures long-range temporal context and produces probabilistic virtual observations. Neural Kalman Fusion couples the two branches by correcting the latent physical state at each forecasting step and feeding the posterior state back into the simulator, reducing long-horizon drift. We further introduce \textbf{HanoiAir}, a real-world dataset integrating pollutant concentrations, meteorology, and emission inventories. Experiments on \textbf{HanoiAir} and a public benchmark show that \framework improves long-horizon stability and sudden-change forecasting.
\end{abstract}

\section{Introduction}
\label{sec:introduction}

Air pollution is a major global health and environmental challenge, motivating accurate air quality forecasting for public policy, emission control, and public-health protection~\cite{DBLP:conf/kdd/HsiehLZ15,DBLP:conf/kdd/ZhengYLLSCL15}. However, forecasting pollutant concentrations remains difficult because atmospheric dynamics are governed by coupled emission sources, meteorological conditions, transport processes, and chemical reactions~\cite{DBLP:journals/tgrs/ZhuXFYLZZWWS23}. This challenge is especially pronounced for secondary pollutants, such as ozone (O\textsubscript{3}) and secondary components of PM\textsubscript{2.5}, which arise from nonlinear photochemical reactions involving precursors such as NO\textsubscript{x} and VOCs under meteorological drivers such as solar radiation, temperature, and humidity. These processes evolve over multiple hours or days, requiring both long-range temporal modeling and physical consistency.

Traditional Chemical Transport Models (CTMs)~\cite{DBLP:conf/igarss/SunXBCWJJZ22,DBLP:journals/remotesensing/WenSLMWWXS21}, as illustrated in Fig.~\ref{fig:impacts-on-air-quality}, encode atmospheric transport, diffusion, and chemical reactions through differential equations derived from physical principles. Although interpretable, CTMs rely on simplified assumptions, incomplete reaction mechanisms, uncertain boundary conditions, and expensive solvers. In contrast, data-driven deep learning models learn predictive patterns directly from observations and often achieve strong short-term accuracy, but they may violate physical constraints, generalize poorly under distribution shifts, and accumulate errors during long-horizon autoregressive rollout.

\begin{wrapfigure}{r}{0.35\linewidth}
    \centering
    \includegraphics[width=0.95\linewidth]{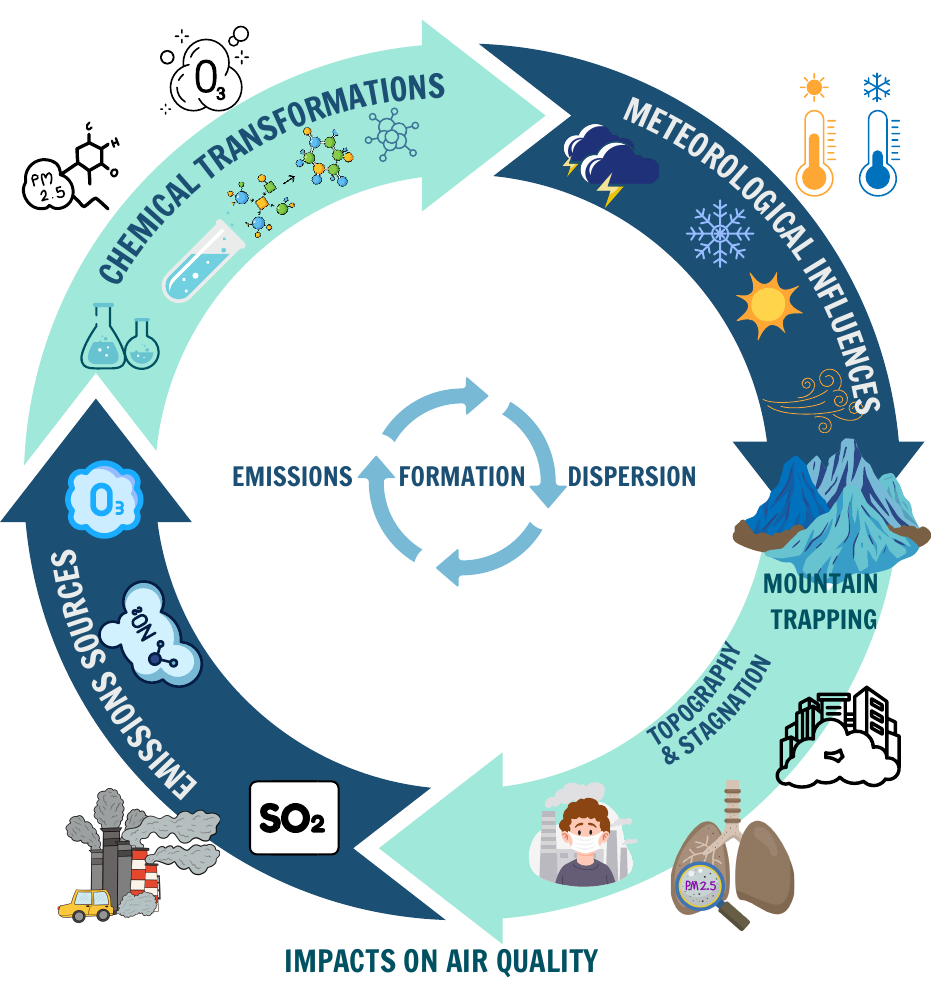}
    \caption{\small Impacts on Air Quality.}
    \label{fig:impacts-on-air-quality}
    \vspace{-10pt}
\end{wrapfigure}

Physics-Informed Machine Learning (PIML)~\cite{DBLP:journals/jcphy/RaissiPK19,DBLP:conf/colt/DoumecheBBB24} aims to combine physical modeling with neural learning, but existing air quality forecasting methods remain limited. Many oversimplify atmospheric chemistry as linear source--sink terms or generic neural residuals, which is insufficient for nonlinear and condition-dependent secondary pollutant formation. More importantly, most hybrid architectures follow an open-loop design, where physical and data-driven branches are executed separately and fused only at the output level. Such designs may correct the final prediction, but they cannot correct the latent physical state propagated to future steps, allowing errors from misspecified physics, unobserved emissions, and stochastic atmospheric disturbances to accumulate.

In this work, we argue that long-term chemically reactive forecasting requires a closed-loop design, where physics and data-driven evidence correct each other during rollout. The physical branch should provide a structured dynamical prior, while the data-driven branch should provide observation-conditioned evidence for correcting the evolving state. Crucially, this correction should occur in latent state space before the next simulation step, rather than only at the final output. This view turns long-horizon forecasting into recurrent posterior-state inference under imperfect physical dynamics.

To realize this idea, we propose \framework, a closed-loop physics-informed state space framework for long-term chemically reactive air quality forecasting. \framework couples two interacting branches through Neural Kalman Fusion. The first is a UDE-based process branch that acts as a differentiable simulator and explicitly separates transport dynamics from chemical dynamics, using diffusion, wind-driven advection, and a meteorology-gated neural reaction operator. The second is a Mamba-based virtual measurement branch that captures long-range temporal dependencies from historical observations and produces probabilistic virtual measurements. At each forecasting step, Neural Kalman Fusion combines the process prior and virtual measurement into a posterior latent state, which is recursively fed back into the process branch. This closed-loop co-correction mechanism reduces long-horizon drift and improves robustness in open, chemically reactive environments by allowing data-driven evidence and physics-informed dynamics to correct each other during rollout.

We evaluate \framework on two real-world air quality datasets, including \textbf{HanoiAir}, a newly constructed regional dataset integrating pollutant concentrations, meteorological variables, and emission inventories. Experiments show that \framework is particularly effective in long-horizon forecasting and sudden pollution-change scenarios, where open-loop and purely data-driven models often suffer from drift or over-smoothing. Ablation studies further demonstrate the importance of closed-loop feedback, meteorology-gated reaction modeling, and neural state correction.

In summary, this work makes the following contributions: (i) We formulate long-term chemically reactive air quality forecasting as closed-loop latent-state inference under imperfect physical dynamics. (ii) We propose \framework, a two-branch physics-informed state space framework that couples a UDE-based process branch and a Mamba-based measurement branch through Neural Kalman Fusion. (iii) We introduce a meteorology-gated neural reaction mechanism to model nonlinear secondary pollutant formation conditioned on emissions and meteorological drivers. (iv) We construct and release \textbf{HanoiAir}, a comprehensive regional air quality dataset containing pollutant concentrations, meteorological variables, and emission inventories. (v) We demonstrate that \framework improves long-horizon forecasting stability and sudden-change prediction across real-world air quality datasets.

\section{Preliminaries}
\label{sec:preliminaries}

\vspace{-5pt}
\paragraph{Sensor Network.}
We model an air quality monitoring system as a weighted static graph $\mathcal{G} = (\mathcal{V}, \mathcal{E}, \mathbf{A})$, where $\mathcal{V} = \{v_1, \dots, v_N\}$ denotes the set of $N$ monitoring stations, $\mathcal{E}$ represents the set of edges, and $\mathbf{A} \in \mathbb{R}^{N \times N}$ is a weighted adjacency matrix encoding spatial correlations between stations, such as geographic proximity.
Each station $v_i$ is associated with fixed spatial coordinates $\mathbf{p}_i \in \mathbb{R}^2$.


\vspace{-5pt}
\paragraph{Multivariate Observations.}
At each discrete time step $t$, the observed system state is denoted as $\mathbf{X}_t \in \mathbb{R}^{N \times D_{\text{in}}}$, where each station is associated with a $D_{\text{in}}$-dimensional feature vector.
We decompose $\mathbf{X}_t$ into three components:
    (1) \textbf{Pollutant concentrations} $\mathbf{C}_t$: Pollutant concentrations are the primary forecast targets, captured via air pollutant variables, including target pollutants (e.g., PM\textsubscript{2.5}) and related co-pollutants (e.g., O\textsubscript{3}, NO\textsubscript{x});
    (2) \textbf{Meteorological variables} $\mathbf{M}_t$: environmental drivers including scalar fields (e.g., temperature, humidity, solar radiation) and vector fields (e.g., wind speed and direction);
    (3) \textbf{Emission rates} $\mathbf{E}_t$: emission inventories describe source-intensity inputs, i.e., estimated pollutant or precursor releases into the atmosphere, including NO\textsubscript{x}, VOCs, SO\textsubscript{2}, and primary PM\textsubscript{2.5}.
Here, $\mathbf{C}_t$ represents pollutant levels already present in the atmosphere and serves as the main prediction target, whereas $\mathbf{E}_t$ represents external emission forcing that contributes to future pollutant formation and transport.
Given an input window of length $T_{\text{in}}$, the historical observation sequence is denoted as $\mathbf{X}_{1:T_{\text{in}}} \in \mathbb{R}^{T_{\text{in}} \times N \times D_{\text{in}}}$.

\vspace{-5pt}
\paragraph{Problem Formulation.}
Given the sensor graph $\mathcal{G}$ and historical observations $\mathbf{X}_{1:T_{\text{in}}}$, our objective is to learn a probabilistic forecasting function $\mathcal{F}_\Theta$ that estimates the posterior distribution of future observations over a prediction horizon $T_{\text{out}}$:
\(
    p\!\left(\mathbf{X}_{T_{\text{in}}+1:T_{\text{in}}+T_{\text{out}}}\mid\mathbf{X}_{1:T_{\text{in}}}, \mathcal{G}\right).
    \label{eq:prob_forecasting_function}
\)
This formulation enables both point prediction and uncertainty estimation for spatiotemporal air quality forecasting.  
\section{Methodology}
\label{sec:methodology}

\begin{figure*}[t]
    \centering
    \includegraphics[width=1.0\linewidth]{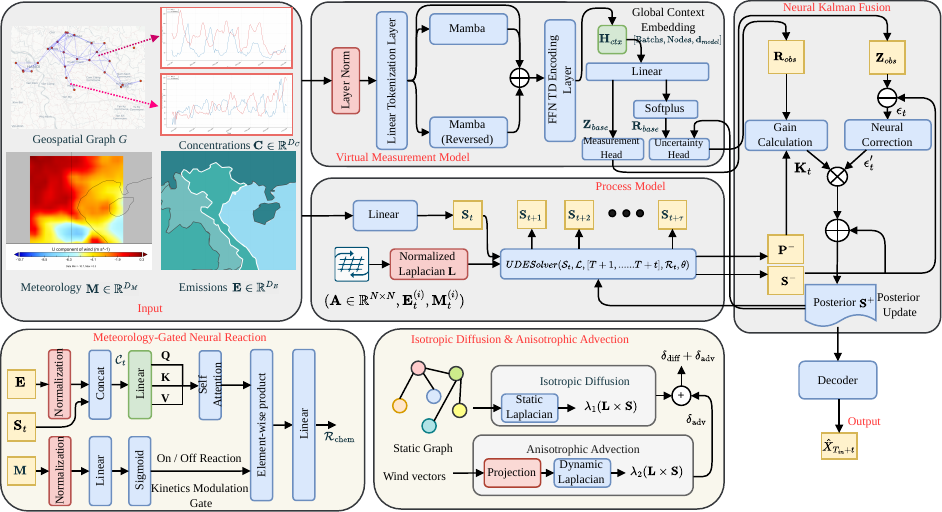}
    \caption{\small \framework architecture. A \textrm{S-Mamba}-based virtual measurement branch generates adaptive virtual observations, while a UDE-based process branch models transport and meteorology-gated neural reactions as a structured physical prior. Neural Kalman Fusion couples the two branches through a learnable correction term, producing a posterior state that is recursively fed back into the UDE. This closed-loop neural state correction allows virtual measurements and physics-informed dynamics to correct each other, mitigating long-horizon drift.}
    \label{fig:framework-overview}
    \vspace{-17pt}
\end{figure*}


To address the probabilistic forecasting problem defined in Sec.~\ref{sec:preliminaries}, we propose \framework, a closed-loop physics-informed state space framework for long-term air quality forecasting. Unlike prior hybrid approaches that loosely couple physical and data-driven components, \framework follows a \emph{closed-loop neural state correction} paradigm. At each forecasting step, a physics-informed process branch provides a structured but imperfect prior over the latent atmospheric state, while a virtual measurement branch produces data-driven virtual observations from historical context and the current rollout state. Neural Kalman Fusion couples the two branches by correcting the latent physical state before it is propagated to the next step. For clarity, we summarize the complete closed-loop forecasting procedure in Appx.~\ref{app:subsec:algorithm}.


\subsection{Framework Overview}
At each forecasting step, \framework integrates three tightly coupled components:
(1) a \textbf{Virtual Measurement Branch} that extracts long-range temporal context from historical observations and produces probabilistic virtual measurements;
(2) a \textbf{Process Branch} formulated as a stochastic Universal Differential Equation (UDE) that models transport and chemically reactive dynamics as a structured physical prior; and
(3) a \textbf{Neural Kalman Fusion} module that fuses the virtual measurement with the UDE prior to produce a corrected posterior latent state.
The posterior state is recursively fed back into the process branch, forming a closed-loop co-correction system that mitigates error accumulation in long-horizon forecasting.
The overall architecture is illustrated in Fig.~\ref{fig:framework-overview}.



\subsection{Virtual Measurement Branch: Context-Conditioned Virtual Observations}

The virtual measurement branch serves as a \emph{context-conditioned virtual sensor} that generates probabilistic virtual observations from historical data.
Unlike a standard observation model, it does not directly map the latent state to real measurements; instead, it produces adaptive virtual observations that guide the correction of the latent physical state during rollout.
Given the input sequence $\mathbf{X}_{1:T_{\text{in}}}$, we employ \textbf{S-Mamba}~\cite{DBLP:journals/ijon/WangKFWYZWZ25}, a selective state space model, to efficiently capture long-range temporal dependencies. The encoder produces a global context embedding
$\mathbf{H}_{\text{ctx}} \in \mathbb{R}^{N \times d_{\text{model}}}$.
We then convert this context into a base virtual observation and its uncertainty via two projection heads:
\begin{equation}
\mathbf{H}_{\text{ctx}} = \text{S-Mamba}(\mathbf{X}_{1:T_{\text{in}}}), \quad\quad 
\mathbf{Z}_{\text{base}}, \mathbf{R}_{\text{base}} = \text{ProjHeads}(\mathbf{H}_{\text{ctx}}).
\end{equation}
To enable efficient autoregressive decoding, we decompose the virtual measurement into static and dynamic components:
\begin{equation}   
\mathbf{Z}_{\text{obs},t} = \mathbf{Z}_{\text{base}} + \alpha \tanh(\mathbf{S}^+_{t-1}), \quad
\mathbf{R}_{\text{obs},t} = \text{Softplus}(\mathbf{R}_{\text{base}})\left(1 + \beta \frac{t}{T_{\text{out}}}\right), 
\label{eq:zobs_robs}
\end{equation}
where $\mathbf{S}^+_{t-1}$ denotes the posterior latent state from the previous step. The first equation allows the virtual observation to adapt to the evolving trajectory without accessing future pollutant observations, while the second models the growth of epistemic uncertainty over the forecasting horizon.
Both $\alpha$ and $\beta$ are learnable scalars.

\subsection{Process Model: Universal Differential Equation}

We model the latent physical state $\mathbf{S}_t \in \mathbb{R}^{N \times D}$ as a continuous-time stochastic process governed by a Universal Differential Equation:
\begin{equation}
d\mathbf{S}_t = \underbrace{\mathcal{F}_{\text{drift}}(\mathbf{S}_t, \mathbf{E}_t, \mathbf{M}_t)}_{\text{Deterministic Dynamics}}\,dt + \underbrace{\mathbf{G}(\mathbf{S}_t)}_{\text{Stochastic Diffusion}}\,d\mathbf{W}_t,
\label{eq:ude_master}
\end{equation}
where $\mathbf{W}_t$ denotes a Wiener process. Here, $\mathbf{M}_t$ and $\mathbf{E}_t$ are treated as exogenous drivers at forecast step $t$; their inference-time availability and preprocessing are specified in Sec.~\ref{sec:experiments}.
The stochastic term captures unresolved atmospheric variability, boundary leakage, and unobserved emissions that cannot be explicitly modeled by deterministic physics alone.
The drift function $\mathcal{F}_{\text{drift}}$ explicitly disentangles transport and chemical reaction processes.

\vspace{-7pt}
\paragraph{Isotropic Diffusion.}
Background dispersion is modeled using a normalized graph Laplacian $\mathbf{L}_{\text{diff}} \in \mathbb{R}^{N \times N}$ derived from the static sensor graph:
\(
    \delta_{\text{diff}} = -\lambda_1 \mathbf{L}_{\text{diff}} \mathbf{S}_t,
\)
where $\lambda_1 \ge 0$ is a learnable diffusion coefficient constrained via \texttt{Softplus}.
The Laplacian operator acts on the spatial dimension and is broadcast across feature channels, enforcing flux from high- to low-concentration regions.

\vspace{-7pt}
\paragraph{Anisotropic Advection.}
To capture wind-driven transport, we construct a time-varying advection operator $\mathbf{L}_{\text{adv}}(t)$.
For each directed edge $(v_i, v_j)$, we define a unit direction vector:
\(
    \displaystyle
    \mathbf{e}_{ij} = \frac{(\mathbf{p}_j - \mathbf{p}_i)}{\|\mathbf{p}_j - \mathbf{p}_i\|_2}.
\)
Given wind velocity $\mathbf{v}_{i,t}$ at node $i$, dynamic edge weights are computed as:
\(
    \displaystyle
    w_{ij}(t) = \text{ReLU}\!\left(\frac{\mathbf{v}_{i,t} \cdot \mathbf{e}_{ij}}{\|\mathbf{v}_{i,t}\|_2}\right)\|\mathbf{v}_{i,t}\|_2.
\)
These weights define $\mathbf{L}_{\text{adv}}(t)$, yielding the advection drift
\(
    \delta_{\text{adv}} = -\lambda_2 \mathbf{L}_{\text{adv}}(t)\mathbf{S}_t,
\)
where $\lambda_2 \ge 0$ is learnable.

\paragraph{Meteorology-Gated Neural Reaction.}
Secondary pollutant formation is governed by nonlinear, condition-dependent chemical interactions.
We model these dynamics using a meteorology-gated neural reaction mechanism.
First, latent reactant features are constructed as
\(
    \mathbf{H}_{\text{react}} = \mathrm{Concat}(\mathbf{S}_t, \mathbf{E}_t).
\)
Self-attention is applied to learn interaction structure:
\begin{equation}
\mathbf{Q} = \mathbf{H}_{\text{react}}\mathbf{W}_Q,\quad
\mathbf{K} = \mathbf{H}_{\text{react}}\mathbf{W}_K,\quad
\mathbf{V} = \mathbf{H}_{\text{react}}\mathbf{W}_V.
\end{equation}
Rather than recovering explicit stoichiometric coefficients, the attention mechanism learns conditional interaction affinities between latent species under uncertainty.

To enforce environmental dependence, we introduce a meteorological gate:
\begin{equation}
\mathcal{R}_{\text{chem}} = \mathrm{Attn}(\mathbf{Q}, \mathbf{K}, \mathbf{V}) \odot \sigma\!\left(\mathrm{Linear}(\mathbf{M}^{\text{sc}}_t)\right),
\end{equation}
where $\odot$ denotes element-wise multiplication and $\mathbf{M}^{\text{sc}}_t$ denotes scalar meteorological variables (e.g., radiation, temperature).
This gating mechanism suppresses reactions under unfavorable conditions and acts as a data-driven analog of reaction rate modulation. 

The full drift is given by:
\(
    \mathcal{F}_{\text{drift}} = \delta_{\text{diff}} + \delta_{\text{adv}} + \mathcal{R}_{\text{chem}}.
\)

\subsection{Neural Kalman Fusion}

To couple the virtual measurement branch with the physics-informed process branch, we employ a Neural Kalman Fusion module inspired by recursive filtering~\cite{DBLP:journals/tsp/RevachSNESE22}.
For scalability, we adopt a diagonal covariance approximation, reducing complexity from $\mathcal{O}(N^3)$ to $\mathcal{O}(N)$.

\vspace{-7pt}
\paragraph{Prediction.}
Starting from $\mathbf{S}^+_{t-1}$, the UDE solver produces a prior mean $\mathbf{S}^-_t$.
Process uncertainty is estimated as
\begin{equation}
\label{eq:softplus}
    \mathbf{P}^-_t = \mathrm{Softplus}(\text{Linear}(\mathbf{S}^-_t)).
\end{equation}

\vspace{-7pt}
\paragraph{Correction.}
Given the virtual observation $(\mathbf{Z}_{\text{obs},t}, \mathbf{R}_{\text{obs},t})$, the update equations are:
\begin{equation}
\begin{aligned}
\mathbf{K}_t = \mathbf{P}^-_t \oslash \left(\mathbf{P}^-_t + \mathbf{R}_{\text{obs},t}\right), \quad
\boldsymbol{\epsilon}_t &= \mathbf{Z}_{\text{obs},t} - \mathbf{S}^-_t, \quad
\boldsymbol{\epsilon}'_t = \boldsymbol{\epsilon}_t + \text{Linear}([\mathbf{S}^-_t, \mathbf{Z}_{\text{obs},t}]), \\
\mathbf{S}^+_t = \mathbf{S}^-_t + \mathbf{K}_t \odot \boldsymbol{\epsilon}'_t&, \quad
\mathbf{P}^+_t = (\mathbf{1} - \mathbf{K}_t) \odot \mathbf{P}^-_t.
\end{aligned}
\label{eq:correction}
\end{equation}
Here, $\oslash$ denotes element-wise division. The learnable correction term allows the update to adapt to non-Gaussian residuals caused by open-system dynamics, such as boundary leakage, unobserved emissions, and model misspecification.

\paragraph{Closed-loop Feedback.}
The posterior state $\mathbf{S}^+_t$ is recursively fed back as the initial condition for the next prediction step.
Thus, the virtual measurement branch and the process branch correct each other during rollout: virtual measurements correct the misspecified physical trajectory, while the UDE prior regularizes noisy data-driven evidence through structured transport--reaction dynamics.
This closed-loop neural state correction reduces long-horizon drift.

\subsection{Theoretical Interpretation}
\label{subsec:theory_summary}

We provide two theoretical interpretations of \framework in the Appendix. 
First, Appx.~\ref{app:variational} shows that Neural Kalman Fusion can be viewed as an approximate variational correction of an imperfect physical prior. The UDE prediction defines a physics-informed prior over the latent state, while the virtual measurement branch provides a data-driven correction signal. When the residual potential of the open system is Gaussian, the update reduces to the diagonal Kalman correction. When the residual potential is non-Gaussian, a first-order approximation motivates the learnable affine correction term in Eq.~\ref{eq:correction}. This interpretation clarifies why the correction term is not merely a heuristic residual layer, but an approximation to a non-Gaussian posterior-state update.

Second, Appx.~\ref{app:errorinlongterm} analyzes why closed-loop correction improves long-horizon stability. Under standard Lipschitz and bounded-noise assumptions, the posterior error of the closed-loop system remains uniformly bounded when the correction factor is contractive. In contrast, the corresponding open-loop variant, which propagates the UDE prior without posterior feedback, can amplify initial errors over time. This analysis supports the central design of \framework: correcting the latent physical state before the next rollout step is more stable than correcting only the final output. Moreover, computational complexity is provided in App~\ref{app:bigO}.

\subsection{Output Generation}
After closed-loop neural state correction, we obtain a sequence of posterior latent states $\{\mathbf{S}^+_t\}_{t=1}^{T_{\text{out}}}$ and their corresponding covariance matrices $\{\mathbf{P}^+_t\}_{t=1}^{T_{\text{out}}}$. 
To generate probabilistic forecasts, we employ a decoding module that maps these latent distributions to the observation space. Specifically, the predictive \textit{mean} and \textit{variance} are decoupled:
\begin{equation}
\hat{\mathbf{X}}_{T_{\text{in}}+t} = \mathcal{D}_{\text{out}}(\mathbf{S}^+_t), \quad
\boldsymbol{\sigma}^2_{T_{\text{in}}+t} = \text{Softplus}(\mathcal{D}_{\text{var}}(\mathbf{P}^+_t)),
\end{equation}
for $t = 1, \dots, T_{\text{out}}$. Here, $\mathcal{D}_{\text{out}}$ and $\mathcal{D}_{\text{var}}$ are shallow multilayer perceptrons with shared parameters across time and nodes. This ensures that the final output provides a fully parameterized predictive distribution $p(\mathbf{X}_t \mid \mathbf{S}^+_t) = \mathcal{N}(\hat{\mathbf{X}}_t, \text{diag}(\boldsymbol{\sigma}^2_t))$ to quantify epistemic uncertainty.

\subsection{Training Objective}

The model is trained end-to-end by minimizing:
\begin{equation}
\begin{aligned}
    \mathcal{L}_{\text{total}} =
    \mathcal{L}_{\text{MAE}}(\hat{\mathbf{X}}, \mathbf{X})
    + \lambda_{\text{nll}}\,\mathcal{L}_{\text{NLL}}(\hat{\mathbf{X}}, \boldsymbol{\sigma}^2, \mathbf{X})
    + \lambda_{\text{phy}}\,\|\mathcal{F}_{\text{drift}}\|_F^2,
\end{aligned}
\end{equation}
where $\hat{\mathbf{X}}_t =
\mathcal{D}_{\text{out}}(\mathbf{S}^+_t)$ and
$\boldsymbol{\sigma}^2_t =
\mathrm{Softplus}(\mathcal{D}_{\text{var}}(\mathbf{P}^+_t))$
are the predicted mean and variance decoded from the Kalman posterior,
$\displaystyle\mathcal{L}_{\text{NLL}}(\hat{\mathbf{X}}, \boldsymbol{\sigma}^2, \mathbf{X})
= \frac{1}{2}\log \boldsymbol{\sigma}^2 + \frac{(\mathbf{X}-\hat{\mathbf{X}})^2}{2\boldsymbol{\sigma}^2}$
is the heteroscedastic Gaussian NLL, and
the physics regularization penalizes excessively large drift magnitudes, encouraging stable transport–reaction dynamics. To mitigate exposure bias during autoregressive training, we employ scheduled sampling.
Early in training, the process model is conditioned on $\text{StopGrad}(\mathbf{Z}_{\text{obs}})$ with a probability that decays exponentially, before transitioning to fully closed-loop operation.

\section{Related Work}
\label{sec:relatedwork}

\noindent\textbf{Spatio-temporal Air Quality Forecasting.}
Early air quality forecasting relied on statistical models such as \texttt{ARIMA}~\cite{DBLP:journals/kbs/Li0023} and \texttt{VAR}~\cite{DBLP:journals/tsp/ZamanLRB21}, which are limited by linear assumptions.
Deep learning methods subsequently improved predictive accuracy by modeling complex spatiotemporal dependencies, including RNN-based architectures and CNNs for gridded data.
To handle non-Euclidean sensor networks, graph-based models such as \texttt{DCRNN}~\cite{DBLP:conf/iclr/LiYS018} and \texttt{STGCN}~\cite{DBLP:conf/ijcai/YuYZ18} were introduced and later adapted to air quality forecasting in \texttt{PM2.5-GNN}~\cite{DBLP:conf/gis/WangLZMMG20}.
More recent approaches leverage Transformers (e.g., \texttt{iTransformer}~\cite{DBLP:conf/iclr/LiuHZWWML24}, ~\cite{DBLP:conf/iclr/NieNSK23}) and structured State Space Models such as \texttt{Mamba}~\cite{DBLP:journals/corr/abs-2312-00752} to capture long-range temporal dependencies efficiently.
Despite strong short-term performance, these data-driven methods do not explicitly model transport or chemical dynamics, and their predictions are usually generated in an open-loop autoregressive manner.
As a result, they may accumulate errors over long horizons and over-smooth abrupt secondary pollution events driven by meteorological and chemical factors.
In contrast, \framework uses long-range sequence modeling only as a virtual measurement branch, which is coupled with a physics-informed process branch through closed-loop latent-state correction.

\vspace{-3pt}
\noindent\textbf{Physics-Informed Machine Learning.}
Physics-informed machine learning (PIML) incorporates physical knowledge into neural models to improve interpretability and robustness.
\texttt{PINN}s~\cite{DBLP:journals/jcphy/RaissiPK19} enforce governing PDEs through soft constraints but are difficult to optimize for high-dimensional atmospheric systems.
\texttt{Neural ODE}s~\cite{DBLP:conf/nips/ChenRBD18} and their extensions have enabled continuous-time modeling of physical dynamics.
In air quality forecasting, \texttt{AirPhyNet}~\cite{DBLP:conf/iclr/HettigeJXLCW24} models diffusion and advection but assumes a closed system, while \texttt{Air-DualODE}~\cite{DBLP:conf/iclr/TianL0CGZPRY25} addresses open-system dynamics using a dual-branch design.
Universal Differential Equations (UDEs)~\cite{DBLP:journals/corr/abs-2001-04385} further allow neural approximation of unknown physical terms.
However, existing PIML approaches for air quality forecasting often oversimplify chemical reactions and couple physical and neural components only loosely, for example by combining their outputs rather than correcting the evolving latent physical state. 
This open-loop coupling limits robustness in long-horizon forecasting because errors from misspecified physics can still be propagated forward. 
In contrast, \framework formulates the UDE as a structured but imperfect process branch and repeatedly corrects its latent state using context-conditioned virtual measurements.

\vspace{-3pt}
\noindent\textbf{Deep Data Assimilation and Kalman Filtering.}
Data assimilation combines imperfect dynamical models with observations through recursive state estimation. Classical approaches such as the Kalman Filter~\cite{kalmanfilter}, Extended Kalman Filter~\cite{DBLP:journals/tsp/EinickeW99}, and Ensemble Kalman Filter are widely used in geosciences, weather forecast, and control systems but rely on linear-Gaussian assumptions or costly ensembles and require hand-designed uncertainty models. Learning-based approaches relax these assumptions by learning filtering components from data, including Deep Kalman Filters~\cite{DBLP:journals/corr/KrishnanSS15} and KalmanNet~\cite{DBLP:journals/tsp/RevachSNESE22}. 
These methods motivate recursive correction, but they typically assume access to real observations or a standard observation model.
In our setting, future pollutant observations are unavailable during forecasting; therefore, \framework does not use a standard observation model.
Instead, its virtual measurement branch generates context-conditioned virtual observations from historical context and the previous posterior state.
Neural Kalman Fusion then fuses these virtual measurements with the UDE prior to correct the latent physical state at each rollout step.
Thus, \framework is best viewed as an assimilation-inspired closed-loop neural state correction framework, rather than a classical data-assimilation model.
\section{Experiments}
\label{sec:experiments}
\subsection{Experimental Settings}

\vspace{-5pt}
\paragraph{Datasets.}
We evaluate \framework on two real-world hourly air quality datasets: \textbf{KnowAir} and \textbf{HanoiAir}. \textbf{KnowAir} contains 355 monitoring stations from 2016--2023 and integrates pollutant concentrations, meteorological variables, and emission inventories. \textbf{HanoiAir} is a newly constructed dataset covering northern Vietnam with 27 monitoring stations from 2018--2025, including pollutant concentrations, meteorology, and downscaled emission inputs. For both datasets, we use chronological splits to evaluate future forecasting: \textbf{KnowAir} uses 2016--2019 for training, 2020--2021 for validation, and 2022--2023 for testing; \textbf{HanoiAir} uses 2018--2021 for training, 2022--2023 for validation, and 2024--2025 for testing. Detailed dataset descriptions and preprocessing procedures are provided in Appx.~\ref{app:hanoiair}.

\vspace{-9pt}
\paragraph{Baselines.}
We compare \framework against a diverse set of baseline models spanning several categories:
(1) \textbf{Statistical Methods:} Standard time series models including \texttt{HA} (Historical Average) and \texttt{VAR} (Vector Autoregression)~\cite{DBLP:journals/tsp/ZamanLRB21}.
(2) \textbf{Neural ODE / Physics-Informed Models:} Methods incorporating physics-informed models, such as \texttt{LatentODE}~\cite{DBLP:conf/nips/ChenRBD18}, \texttt{ODE-LSTM}~\cite{DBLP:journals/corr/abs-2006-04418}, \texttt{Air-DualODE}~\cite{DBLP:conf/iclr/TianL0CGZPRY25}, and \texttt{AirPhyNet}~\cite{DBLP:conf/iclr/HettigeJXLCW24}.
(3) \textbf{GNN-based Spatio-Temporal Models:} Models explicitly designed for graph-structured time series data, including \texttt{STGCN}~\cite{DBLP:conf/ijcai/YuYZ18}, \texttt{DCRNN}~\cite{DBLP:conf/iclr/LiYS018}, \texttt{GTS}~\cite{DBLP:conf/iclr/Shang0B21}, and \texttt{PM25GNN}~\cite{DBLP:conf/gis/WangLZMMG20}.
(4) \textbf{Linear and MLP-based Models:} Efficient architectures focusing on simpler mappings, such as \texttt{DLinear}~\cite{DBLP:conf/aaai/ZengCZ023} and \texttt{ModernTCN}~\cite{DBLP:journals/corr/abs-1803-01271}.
(5) \textbf{Transformer-based Models:} Models leveraging attention mechanisms, including \texttt{Fredformer}~\cite{DBLP:conf/kdd/Piao0MMS24}, \texttt{PatchTST}~\cite{DBLP:conf/iclr/NieNSK23}, \texttt{Crossformer}~\cite{DBLP:conf/iclr/ZhangY23}, and \texttt{TimeMoE}~\cite{DBLP:conf/iclr/ShiWNLYWJ25}.
(6) \textbf{State Space Models (SSMs):} Models based on structured state space representations, including \texttt{S-Mamba}~\cite{DBLP:journals/corr/abs-2312-00752}), \texttt{S4}~\cite{DBLP:conf/iclr/GuGR22}, and \texttt{BiMamba}~\cite{DBLP:journals/corr/abs-2411-11843}.

\vspace{-9pt}
\paragraph{Metrics.}
To quantitatively assess the performance of \framework and baseline models, we employ three standard forecasting metrics: Mean Absolute Error (MAE), Root Mean Square Error (RMSE), Symmetric Mean Absolute Percentage Error (SMAPE), Continuous Ranked Probability Score (CRPS), Summed CRPS (CRPS\textsubscript{sum}), and Prediction Interval Coverage Probability (PICP). Formal definitions of these metrics are provided in Appx.~\ref{app:subsec:metrics}.



\vspace{-9pt}
\paragraph{Hyperparameter \& Implementation Details.}
Detailed hyperparameter settings are provided in Appx.~\ref{app:subsec:hyperparams}, and implementation details are included in Appx.~\ref{app:subsec:implementation}.  An implementation is available at \href{https://github.com/kiendemo/AirQuality.git}{\textit{Github link}}.


\vspace{-9pt}
\paragraph{Forecasting Scenarios.} 
We evaluate four forecasting scenarios: \textbf{3 days} (1--72 hours), \textbf{6 days} (1--144 hours), \textbf{9 days} (1--216 hours), and \textbf{Sudden Change} events~\cite{DBLP:conf/iclr/TianL0CGZPRY25}. Sudden changes are defined as test instances where the ground-truth PM\textsubscript{2.5} concentration exceeds $75~\mu\text{g}/\text{m}^3$ and changes by more than $\pm20~\mu\text{g}/\text{m}^3$ within the next three hours. All metrics are computed after inverse transformation to the original physical scale.

\subsection{Main Results}

\begin{table*}[t]
\centering
\setlength{\tabcolsep}{1.5pt}
\caption{\small Performance comparison on the \textbf{KnowAir} dataset. The best results are highlighted in \textbf{bold}, and the second-best results are \underline{underlined}. Mean $\pm$ std over 5 runs.}
\label{tab:know_results}
\resizebox{\textwidth}{!}{ 
\begin{tabular}{l|ccc|ccc|ccc|ccc}
\toprule
\multirow{2}{*}{Methods} 
& \multicolumn{3}{c|}{3 Days } 
& \multicolumn{3}{c|}{6 Days } 
& \multicolumn{3}{c|}{9 Days } 
& \multicolumn{3}{c}{Sudden Change} \\
\cmidrule(lr){2-13}
& MAE & RMSE & SMAPE 
& MAE & RMSE & SMAPE 
& MAE & RMSE & SMAPE 
& MAE & RMSE & SMAPE \\
\midrule
\texttt{HA}        & 35.20\textsubscript{$\pm$0.05} & 51.50\textsubscript{$\pm$0.07} & 0.65\textsubscript{$\pm$0.001} & 37.80\textsubscript{$\pm$0.06} & 54.20\textsubscript{$\pm$0.08} & 0.68\textsubscript{$\pm$0.002} & 40.50\textsubscript{$\pm$0.07} & 58.10\textsubscript{$\pm$0.10} & 0.71\textsubscript{$\pm$0.002} & 55.40\textsubscript{$\pm$0.10} & 75.20\textsubscript{$\pm$0.14} & 0.75\textsubscript{$\pm$0.003} \\
\texttt{VAR}       & 33.80\textsubscript{$\pm$0.08} & 49.20\textsubscript{$\pm$0.10} & 0.62\textsubscript{$\pm$0.002} & 36.50\textsubscript{$\pm$0.09} & 52.40\textsubscript{$\pm$0.13} & 0.66\textsubscript{$\pm$0.003} & 39.20\textsubscript{$\pm$0.11} & 56.50\textsubscript{$\pm$0.16} & 0.69\textsubscript{$\pm$0.003} & 53.10\textsubscript{$\pm$0.16} & 72.40\textsubscript{$\pm$0.22} & 0.72\textsubscript{$\pm$0.004} \\
\midrule
\texttt{LatentODE} & 26.50\textsubscript{$\pm$0.15} & 42.10\textsubscript{$\pm$0.21} & 0.55\textsubscript{$\pm$0.004} & 28.20\textsubscript{$\pm$0.18} & 45.40\textsubscript{$\pm$0.25} & 0.57\textsubscript{$\pm$0.005} & 31.60\textsubscript{$\pm$0.22} & 49.20\textsubscript{$\pm$0.30} & 0.61\textsubscript{$\pm$0.006} & 48.40\textsubscript{$\pm$0.36} & 65.10\textsubscript{$\pm$0.49} & 0.65\textsubscript{$\pm$0.009} \\
\texttt{ODE-LSTM}  & 27.10\textsubscript{$\pm$0.16} & 43.50\textsubscript{$\pm$0.22} & 0.56\textsubscript{$\pm$0.004} & 29.45\textsubscript{$\pm$0.19} & 46.80\textsubscript{$\pm$0.26} & 0.59\textsubscript{$\pm$0.005} & 32.80\textsubscript{$\pm$0.23} & 50.15\textsubscript{$\pm$0.31} & 0.62\textsubscript{$\pm$0.006} & 49.80\textsubscript{$\pm$0.38} & 67.30\textsubscript{$\pm$0.51} & 0.67\textsubscript{$\pm$0.010} \\
\texttt{Air-DualODE} & 23.04\textsubscript{$\pm$0.14} & 38.44\textsubscript{$\pm$0.19} & 0.52\textsubscript{$\pm$0.003} & 24.50\textsubscript{$\pm$0.17} & 41.60\textsubscript{$\pm$0.23} & 0.54\textsubscript{$\pm$0.004} & 25.20\textsubscript{$\pm$0.20} & 42.80\textsubscript{$\pm$0.27} & 0.59\textsubscript{$\pm$0.005} & 39.24\textsubscript{$\pm$0.32} & 53.30\textsubscript{$\pm$0.43} & 0.55\textsubscript{$\pm$0.008} \\
\texttt{AirPhyNet} & 23.80\textsubscript{$\pm$0.10} & 39.50\textsubscript{$\pm$0.14} & 0.53\textsubscript{$\pm$0.003} & 25.10\textsubscript{$\pm$0.12} & 41.80\textsubscript{$\pm$0.17} & 0.55\textsubscript{$\pm$0.004} & 27.20\textsubscript{$\pm$0.15} & 44.50\textsubscript{$\pm$0.20} & 0.57\textsubscript{$\pm$0.004} & 38.50\textsubscript{$\pm$0.24} & 53.10\textsubscript{$\pm$0.32} & 0.56\textsubscript{$\pm$0.006} \\
\midrule
\texttt{STGCN}     & 25.50\textsubscript{$\pm$0.13} & 40.80\textsubscript{$\pm$0.17} & 0.54\textsubscript{$\pm$0.003} & 27.80\textsubscript{$\pm$0.15} & 43.20\textsubscript{$\pm$0.20} & 0.57\textsubscript{$\pm$0.004} & 31.50\textsubscript{$\pm$0.19} & 47.40\textsubscript{$\pm$0.25} & 0.61\textsubscript{$\pm$0.005} & 46.50\textsubscript{$\pm$0.30} & 62.10\textsubscript{$\pm$0.40} & 0.64\textsubscript{$\pm$0.008} \\
\texttt{DCRNN}     & 24.80\textsubscript{$\pm$0.12} & 39.60\textsubscript{$\pm$0.16} & 0.53\textsubscript{$\pm$0.003} & 26.90\textsubscript{$\pm$0.14} & 42.50\textsubscript{$\pm$0.18} & 0.56\textsubscript{$\pm$0.004} & 30.10\textsubscript{$\pm$0.17} & 46.80\textsubscript{$\pm$0.23} & 0.60\textsubscript{$\pm$0.005} & 44.20\textsubscript{$\pm$0.27} & 60.40\textsubscript{$\pm$0.37} & 0.62\textsubscript{$\pm$0.008} \\
\texttt{GTS}       & 24.10\textsubscript{$\pm$0.11} & 38.50\textsubscript{$\pm$0.15} & 0.52\textsubscript{$\pm$0.003} & 26.50\textsubscript{$\pm$0.13} & 41.10\textsubscript{$\pm$0.17} & 0.55\textsubscript{$\pm$0.004} & 29.40\textsubscript{$\pm$0.16} & 45.50\textsubscript{$\pm$0.21} & 0.59\textsubscript{$\pm$0.005} & 42.50\textsubscript{$\pm$0.25} & 58.20\textsubscript{$\pm$0.34} & 0.60\textsubscript{$\pm$0.007} \\
\texttt{PM25GNN}   & 24.50\textsubscript{$\pm$0.12} & 39.20\textsubscript{$\pm$0.16} & 0.53\textsubscript{$\pm$0.003} & 27.10\textsubscript{$\pm$0.14} & 41.80\textsubscript{$\pm$0.18} & 0.56\textsubscript{$\pm$0.004} & 30.20\textsubscript{$\pm$0.17} & 46.10\textsubscript{$\pm$0.22} & 0.60\textsubscript{$\pm$0.005} & 43.10\textsubscript{$\pm$0.27} & 59.50\textsubscript{$\pm$0.36} & 0.61\textsubscript{$\pm$0.008} \\
\midrule
\texttt{DLinear}   & 22.57\textsubscript{$\pm$0.08} & 37.80\textsubscript{$\pm$0.11} & 0.51\textsubscript{$\pm$0.002} & 23.85\textsubscript{$\pm$0.10} & 39.50\textsubscript{$\pm$0.13} & 0.53\textsubscript{$\pm$0.003} & \underline{24.50}\textsubscript{$\pm$0.12} & 41.20\textsubscript{$\pm$0.16} & 0.55\textsubscript{$\pm$0.004} & 38.45\textsubscript{$\pm$0.20} & 51.22\textsubscript{$\pm$0.27} & 0.55\textsubscript{$\pm$0.006} \\
\texttt{ModernTCN} & 22.33\textsubscript{$\pm$0.09} & 37.60\textsubscript{$\pm$0.12} & 0.51\textsubscript{$\pm$0.003} & 23.68\textsubscript{$\pm$0.11} & 39.40\textsubscript{$\pm$0.14} & 0.53\textsubscript{$\pm$0.003} & 25.12\textsubscript{$\pm$0.13} & 41.50\textsubscript{$\pm$0.17} & \textbf{0.54}\textsubscript{$\pm$0.004} & 37.92\textsubscript{$\pm$0.21} & 50.81\textsubscript{$\pm$0.28} & 0.54\textsubscript{$\pm$0.006} \\
\texttt{TimeMoE}   & \underline{21.68}\textsubscript{$\pm$0.09} & 37.20\textsubscript{$\pm$0.12} & \textbf{0.49}\textsubscript{$\pm$0.002} & \underline{23.15}\textsubscript{$\pm$0.11} & \underline{38.95}\textsubscript{$\pm$0.14} & \textbf{0.50}\textsubscript{$\pm$0.003} & 25.32\textsubscript{$\pm$0.13} & \underline{40.15}\textsubscript{$\pm$0.17} & 0.55\textsubscript{$\pm$0.004} & \underline{36.54}\textsubscript{$\pm$0.21} & \textbf{49.12}\textsubscript{$\pm$0.28} & \underline{0.53}\textsubscript{$\pm$0.006} \\
\midrule
\texttt{Fredformer} & \textbf{21.45}\textsubscript{$\pm$0.08} & 38.10\textsubscript{$\pm$0.12} & 0.51\textsubscript{$\pm$0.002} & 24.12\textsubscript{$\pm$0.11} & 40.20\textsubscript{$\pm$0.15} & 0.53\textsubscript{$\pm$0.003} & 26.48\textsubscript{$\pm$0.14} & 42.80\textsubscript{$\pm$0.18} & 0.56\textsubscript{$\pm$0.004} & 38.25\textsubscript{$\pm$0.22} & 51.03\textsubscript{$\pm$0.30} & 0.55\textsubscript{$\pm$0.006} \\
\texttt{PatchTST}   & 22.69\textsubscript{$\pm$0.09} & 38.50\textsubscript{$\pm$0.12} & 0.51\textsubscript{$\pm$0.003} & 23.91\textsubscript{$\pm$0.11} & 40.50\textsubscript{$\pm$0.14} & \underline{0.51}\textsubscript{$\pm$0.003} & 25.25\textsubscript{$\pm$0.13} & 42.80\textsubscript{$\pm$0.18} & 0.56\textsubscript{$\pm$0.004} & 39.12\textsubscript{$\pm$0.22} & 52.34\textsubscript{$\pm$0.30} & 0.56\textsubscript{$\pm$0.006} \\
\texttt{Crossformer} & 22.00\textsubscript{$\pm$0.09} & 37.90\textsubscript{$\pm$0.12} & 0.52\textsubscript{$\pm$0.003} & 23.75\textsubscript{$\pm$0.11} & 39.80\textsubscript{$\pm$0.14} & 0.54\textsubscript{$\pm$0.003} & 26.10\textsubscript{$\pm$0.13} & 42.10\textsubscript{$\pm$0.17} & 0.56\textsubscript{$\pm$0.004} & 37.55\textsubscript{$\pm$0.21} & 50.42\textsubscript{$\pm$0.29} & 0.54\textsubscript{$\pm$0.006} \\
\midrule
\texttt{SMamba}    & 23.12\textsubscript{$\pm$0.10} & 38.80\textsubscript{$\pm$0.13} & 0.52\textsubscript{$\pm$0.003} & 24.24\textsubscript{$\pm$0.12} & 39.40\textsubscript{$\pm$0.15} & 0.55\textsubscript{$\pm$0.004} & 25.15\textsubscript{$\pm$0.14} & 40.72\textsubscript{$\pm$0.18} & 0.58\textsubscript{$\pm$0.004} & 39.85\textsubscript{$\pm$0.23} & 53.12\textsubscript{$\pm$0.31} & 0.57\textsubscript{$\pm$0.007} \\
\texttt{S4}        & 24.53\textsubscript{$\pm$0.12} & 40.20\textsubscript{$\pm$0.16} & 0.54\textsubscript{$\pm$0.003} & 25.18\textsubscript{$\pm$0.13} & 40.51\textsubscript{$\pm$0.17} & 0.57\textsubscript{$\pm$0.004} & 26.21\textsubscript{$\pm$0.15} & 41.23\textsubscript{$\pm$0.19} & 0.60\textsubscript{$\pm$0.005} & 42.14\textsubscript{$\pm$0.25} & 56.55\textsubscript{$\pm$0.34} & 0.61\textsubscript{$\pm$0.008} \\
\texttt{BiMamba}   & 22.84\textsubscript{$\pm$0.10} & 38.50\textsubscript{$\pm$0.13} & 0.52\textsubscript{$\pm$0.003} & 24.07\textsubscript{$\pm$0.12} & 39.10\textsubscript{$\pm$0.15} & 0.55\textsubscript{$\pm$0.003} & 25.63\textsubscript{$\pm$0.14} & 40.88\textsubscript{$\pm$0.18} & 0.58\textsubscript{$\pm$0.004} & 39.25\textsubscript{$\pm$0.23} & 52.64\textsubscript{$\pm$0.31} & 0.56\textsubscript{$\pm$0.007} \\
\midrule
\framework (\textbf{ours}) & 21.95\textsubscript{$\pm$0.06} & \textbf{37.20}\textsubscript{$\pm$0.09} & \underline{0.50}\textsubscript{$\pm$0.002} & \textbf{23.00}\textsubscript{$\pm$0.08} & \textbf{38.80}\textsubscript{$\pm$0.11} & 0.52\textsubscript{$\pm$0.003} & \textbf{24.08}\textsubscript{$\pm$0.10} & \textbf{39.34}\textsubscript{$\pm$0.14} & \textbf{0.54}\textsubscript{$\pm$0.004} & \textbf{34.50}\textsubscript{$\pm$0.18} & \underline{49.50}\textsubscript{$\pm$0.25} & \textbf{0.52}\textsubscript{$\pm$0.005} \\
\bottomrule
\end{tabular}
}
\vspace{-15pt}
\end{table*}

\begin{table*}[t]
\centering
\setlength{\tabcolsep}{1.5pt}
\caption{Performance comparison on \textbf{HanoiAir} dataset. The best results are highlighted in \textbf{bold}, and the second-best results are \underline{underlined}. Mean $\pm$ std over 5 runs.}
\label{tab:hanoi_results}
\resizebox{\textwidth}{!}{ 
\begin{tabular}{l|ccc|ccc|ccc|ccc}
\toprule
\multirow{2}{*}{Methods} 
& \multicolumn{3}{c|}{3 Days} 
& \multicolumn{3}{c|}{6 Days} 
& \multicolumn{3}{c|}{9 Days} 
& \multicolumn{3}{c}{Sudden Change} \\
\cmidrule(lr){2-13}
& MAE & RMSE & SMAPE 
& MAE & RMSE & SMAPE 
& MAE & RMSE & SMAPE 
& MAE & RMSE & SMAPE \\
\midrule
\texttt{HA}        & 55.20\textsubscript{$\pm$0.10} & 78.50\textsubscript{$\pm$0.14} & 0.85\textsubscript{$\pm$0.002} & 58.40\textsubscript{$\pm$0.12} & 82.10\textsubscript{$\pm$0.16} & 0.88\textsubscript{$\pm$0.003} & 62.10\textsubscript{$\pm$0.14} & 86.50\textsubscript{$\pm$0.20} & 0.92\textsubscript{$\pm$0.003} & 75.40\textsubscript{$\pm$0.22} & 105.20\textsubscript{$\pm$0.30} & 0.89\textsubscript{$\pm$0.005} \\
\texttt{VAR}       & 52.80\textsubscript{$\pm$0.13} & 75.20\textsubscript{$\pm$0.18} & 0.82\textsubscript{$\pm$0.003} & 56.50\textsubscript{$\pm$0.15} & 79.40\textsubscript{$\pm$0.21} & 0.85\textsubscript{$\pm$0.003} & 60.20\textsubscript{$\pm$0.18} & 84.80\textsubscript{$\pm$0.25} & 0.88\textsubscript{$\pm$0.004} & 72.10\textsubscript{$\pm$0.28} & 100.40\textsubscript{$\pm$0.38} & 0.86\textsubscript{$\pm$0.006} \\
\midrule
\texttt{LatentODE} & 46.50\textsubscript{$\pm$0.30} & 68.10\textsubscript{$\pm$0.42} & 0.80\textsubscript{$\pm$0.006} & 49.20\textsubscript{$\pm$0.36} & 71.40\textsubscript{$\pm$0.50} & 0.83\textsubscript{$\pm$0.007} & 52.60\textsubscript{$\pm$0.42} & 75.20\textsubscript{$\pm$0.58} & 0.86\textsubscript{$\pm$0.008} & 74.40\textsubscript{$\pm$0.65} & 98.10\textsubscript{$\pm$0.88} & 0.85\textsubscript{$\pm$0.012} \\
ODE-LSTM  & 47.10\textsubscript{$\pm$0.32} & 69.50\textsubscript{$\pm$0.45} & 0.81\textsubscript{$\pm$0.006} & 50.45\textsubscript{$\pm$0.38} & 72.80\textsubscript{$\pm$0.52} & 0.84\textsubscript{$\pm$0.007} & 54.80\textsubscript{$\pm$0.45} & 78.15\textsubscript{$\pm$0.62} & 0.87\textsubscript{$\pm$0.008} & 75.80\textsubscript{$\pm$0.70} & 101.30\textsubscript{$\pm$0.94} & 0.86\textsubscript{$\pm$0.012} \\
Air-DualODE & 
42.04\textsubscript{$\pm$0.28} & 64.41\textsubscript{$\pm$0.40} & 0.78\textsubscript{$\pm$0.005} & 45.50\textsubscript{$\pm$0.34} & 67.60\textsubscript{$\pm$0.46} & 0.82\textsubscript{$\pm$0.006} & 48.20\textsubscript{$\pm$0.40} & 71.80\textsubscript{$\pm$0.54} & 0.86\textsubscript{$\pm$0.007} & 72.24\textsubscript{$\pm$0.58} & 93.30\textsubscript{$\pm$0.78} & 0.83\textsubscript{$\pm$0.010} \\
\texttt{AirPhyNet} & 43.80\textsubscript{$\pm$0.20} & 65.50\textsubscript{$\pm$0.28} & 0.79\textsubscript{$\pm$0.004} & 46.10\textsubscript{$\pm$0.24} & 69.80\textsubscript{$\pm$0.33} & 0.82\textsubscript{$\pm$0.005} & 49.20\textsubscript{$\pm$0.28} & 73.50\textsubscript{$\pm$0.39} & 0.85\textsubscript{$\pm$0.005} & 71.50\textsubscript{$\pm$0.42} & 94.10\textsubscript{$\pm$0.58} & 0.79\textsubscript{$\pm$0.008} \\
\midrule
\texttt{STGCN}     & 45.50\textsubscript{$\pm$0.25} & 67.80\textsubscript{$\pm$0.34} & 0.81\textsubscript{$\pm$0.005} & 48.80\textsubscript{$\pm$0.30} & 72.20\textsubscript{$\pm$0.40} & 0.84\textsubscript{$\pm$0.006} & 52.50\textsubscript{$\pm$0.35} & 77.40\textsubscript{$\pm$0.48} & 0.88\textsubscript{$\pm$0.007} & 74.50\textsubscript{$\pm$0.55} & 98.10\textsubscript{$\pm$0.74} & 0.85\textsubscript{$\pm$0.010} \\
\texttt{DCRNN}     & 44.80\textsubscript{$\pm$0.23} & 66.60\textsubscript{$\pm$0.32} & 0.80\textsubscript{$\pm$0.005} & 47.90\textsubscript{$\pm$0.27} & 71.50\textsubscript{$\pm$0.37} & 0.83\textsubscript{$\pm$0.005} & 51.10\textsubscript{$\pm$0.32} & 76.80\textsubscript{$\pm$0.44} & 0.87\textsubscript{$\pm$0.006} & 73.20\textsubscript{$\pm$0.50} & 97.40\textsubscript{$\pm$0.68} & 0.84\textsubscript{$\pm$0.009} \\
\texttt{GTS}       & 44.10\textsubscript{$\pm$0.21} & 65.50\textsubscript{$\pm$0.29} & 0.79\textsubscript{$\pm$0.004} & 47.50\textsubscript{$\pm$0.25} & 70.10\textsubscript{$\pm$0.34} & 0.82\textsubscript{$\pm$0.005} & 50.40\textsubscript{$\pm$0.30} & 75.50\textsubscript{$\pm$0.41} & 0.86\textsubscript{$\pm$0.006} & 72.50\textsubscript{$\pm$0.46} & 95.20\textsubscript{$\pm$0.62} & 0.83\textsubscript{$\pm$0.008} \\
\texttt{PM25GNN}   & 44.50\textsubscript{$\pm$0.22} & 66.20\textsubscript{$\pm$0.30} & 0.80\textsubscript{$\pm$0.005} & 48.10\textsubscript{$\pm$0.26} & 71.80\textsubscript{$\pm$0.36} & 0.83\textsubscript{$\pm$0.005} & 51.20\textsubscript{$\pm$0.31} & 76.10\textsubscript{$\pm$0.42} & 0.87\textsubscript{$\pm$0.006} & 73.10\textsubscript{$\pm$0.48} & 96.50\textsubscript{$\pm$0.65} & 0.83\textsubscript{$\pm$0.009} \\
\midrule
\texttt{DLinear}   & 42.57\textsubscript{$\pm$0.16} & 64.80\textsubscript{$\pm$0.22} & 0.78\textsubscript{$\pm$0.004} & 44.85\textsubscript{$\pm$0.19} & 67.50\textsubscript{$\pm$0.26} & 0.81\textsubscript{$\pm$0.004} & \underline{47.10}\textsubscript{$\pm$0.22} & 70.20\textsubscript{$\pm$0.30} & 0.84\textsubscript{$\pm$0.005} & 73.45\textsubscript{$\pm$0.36} & 96.22\textsubscript{$\pm$0.48} & 0.82\textsubscript{$\pm$0.007} \\
\texttt{ModernTCN} & 42.33\textsubscript{$\pm$0.17} & 64.60\textsubscript{$\pm$0.23} & 0.78\textsubscript{$\pm$0.004} & 44.68\textsubscript{$\pm$0.20} & 67.40\textsubscript{$\pm$0.27} & 0.81\textsubscript{$\pm$0.004} & 47.50\textsubscript{$\pm$0.23} & 70.50\textsubscript{$\pm$0.31} & 0.84\textsubscript{$\pm$0.005} & 72.92\textsubscript{$\pm$0.37} & 95.81\textsubscript{$\pm$0.50} & 0.81\textsubscript{$\pm$0.007} \\
\texttt{TimeMoE}   & \underline{41.85}\textsubscript{$\pm$0.18} & \underline{63.80}\textsubscript{$\pm$0.25} & \textbf{0.77}\textsubscript{$\pm$0.004} & \underline{44.15}\textsubscript{$\pm$0.21} & \underline{66.95}\textsubscript{$\pm$0.28} & \underline{0.80}\textsubscript{$\pm$0.004} & 47.50\textsubscript{$\pm$0.24} & \textbf{69.15}\textsubscript{$\pm$0.32} & \underline{0.83}\textsubscript{$\pm$0.005} & \underline{71.54}\textsubscript{$\pm$0.38} & \underline{93.12}\textsubscript{$\pm$0.50} & \underline{0.78}\textsubscript{$\pm$0.007} \\
\midrule
\texttt{Fredformer} & \textbf{41.60}\textsubscript{$\pm$0.16} & 64.10\textsubscript{$\pm$0.22} & 0.77\textsubscript{$\pm$0.004} & 44.20\textsubscript{$\pm$0.19} & 67.20\textsubscript{$\pm$0.26} & 0.80\textsubscript{$\pm$0.004} & 47.48\textsubscript{$\pm$0.22} & 71.20\textsubscript{$\pm$0.30} & 0.84\textsubscript{$\pm$0.005} & 73.25\textsubscript{$\pm$0.36} & 95.03\textsubscript{$\pm$0.48} & 0.81\textsubscript{$\pm$0.007} \\
\texttt{PatchTST}   & 42.10\textsubscript{$\pm$0.18} & 64.50\textsubscript{$\pm$0.24} & 0.78\textsubscript{$\pm$0.004} & 44.50\textsubscript{$\pm$0.21} & 67.80\textsubscript{$\pm$0.28} & 0.81\textsubscript{$\pm$0.004} & 47.80\textsubscript{$\pm$0.24} & 71.50\textsubscript{$\pm$0.32} & 0.85\textsubscript{$\pm$0.005} & 74.12\textsubscript{$\pm$0.38} & 96.34\textsubscript{$\pm$0.50} & 0.83\textsubscript{$\pm$0.007} \\
\texttt{Crossformer} & 42.00\textsubscript{$\pm$0.17} & 64.20\textsubscript{$\pm$0.23} & 0.78\textsubscript{$\pm$0.004} & 44.40\textsubscript{$\pm$0.20} & 67.50\textsubscript{$\pm$0.27} & 0.81\textsubscript{$\pm$0.004} & 47.60\textsubscript{$\pm$0.23} & 71.80\textsubscript{$\pm$0.31} & 0.85\textsubscript{$\pm$0.005} & 73.55\textsubscript{$\pm$0.37} & 95.42\textsubscript{$\pm$0.50} & 0.82\textsubscript{$\pm$0.007} \\
\midrule
\texttt{S-Mamba}    & 43.12\textsubscript{$\pm$0.20} & 65.80\textsubscript{$\pm$0.27} & 0.79\textsubscript{$\pm$0.005} & 45.24\textsubscript{$\pm$0.23} & 66.40\textsubscript{$\pm$0.31} & 0.82\textsubscript{$\pm$0.005} & 47.15\textsubscript{$\pm$0.26} & 69.57\textsubscript{$\pm$0.35} & 0.86\textsubscript{$\pm$0.006} & 75.85\textsubscript{$\pm$0.42} & 98.12\textsubscript{$\pm$0.56} & 0.84\textsubscript{$\pm$0.008} \\
\texttt{S4}        & 44.53\textsubscript{$\pm$0.24} & 67.20\textsubscript{$\pm$0.32} & 0.81\textsubscript{$\pm$0.005} & 46.18\textsubscript{$\pm$0.28} & 68.50\textsubscript{$\pm$0.38} & 0.84\textsubscript{$\pm$0.006} & 48.14\textsubscript{$\pm$0.32} & 69.81\textsubscript{$\pm$0.43} & 0.88\textsubscript{$\pm$0.007} & 77.14\textsubscript{$\pm$0.50} & 101.55\textsubscript{$\pm$0.68} & 0.86\textsubscript{$\pm$0.010} \\
\texttt{BiMamba}   & 42.84\textsubscript{$\pm$0.19} & 65.50\textsubscript{$\pm$0.26} & 0.79\textsubscript{$\pm$0.005} & 45.97\textsubscript{$\pm$0.22} & 67.10\textsubscript{$\pm$0.30} & 0.82\textsubscript{$\pm$0.005} & 47.63\textsubscript{$\pm$0.25} & 70.80\textsubscript{$\pm$0.34} & 0.85\textsubscript{$\pm$0.006} & 75.25\textsubscript{$\pm$0.40} & 97.64\textsubscript{$\pm$0.54} & 0.83\textsubscript{$\pm$0.008} \\
\midrule
\framework (\textbf{ours}) & 42.74\textsubscript{$\pm$0.10} & \textbf{63.11}\textsubscript{$\pm$0.14} & \underline{0.78}\textsubscript{$\pm$0.003} & \textbf{44.50}\textsubscript{$\pm$0.13} & \textbf{65.80}\textsubscript{$\pm$0.18} & \textbf{0.79}\textsubscript{$\pm$0.003} & \textbf{46.80}\textsubscript{$\pm$0.16} & \underline{68.34}\textsubscript{$\pm$0.22} & \textbf{0.81}\textsubscript{$\pm$0.004} & \textbf{68.80}\textsubscript{$\pm$0.28} & \textbf{91.16}\textsubscript{$\pm$0.38} & \textbf{0.75}\textsubscript{$\pm$0.006} \\
\bottomrule
\end{tabular}
}
\vspace{-15pt}
\end{table*}
\begin{table*}[h!]
    \centering
    \tiny
    \setlength{\tabcolsep}{1.5pt}
    \caption{\small  Ablation Study: Effect of Physical Mechanisms and Models Design.} 
    \label{tab:combined_ablation}
    \begin{tabular}{l|ccc|ccc|ccc|ccc}
        \toprule
        \multirow{2}{*}{\textbf{Method}} & \multicolumn{6}{c|}{\textbf{HanoiAir}} & \multicolumn{6}{c}{\textbf{KnowAir}} \\
        \cmidrule(lr){2-13}
        & \multicolumn{3}{c|}{\textbf{3 Days}} & \multicolumn{3}{c|}{\textbf{9 Days}} & \multicolumn{3}{c|}{\textbf{3 Days}} & \multicolumn{3}{c}{\textbf{9 Days}} \\
        & MAE & RMSE & SMAPE & MAE & RMSE & SMAPE & MAE & RMSE & SMAPE & MAE & RMSE & SMAPE \\
        \midrule
        \framework & \textbf{42.74} & \textbf{63.11} & \textbf{0.78} & \textbf{46.80} & \textbf{68.34} & \textbf{0.81} & \textbf{21.95} & \textbf{37.20} & \textbf{0.50} & \textbf{24.08} & \textbf{39.34} & \textbf{0.54} \\
        \midrule
        \multicolumn{13}{l}{\textit{Physical Mechanisms}} \\
        \midrule
        w/o Diffusion (\framework\texttt{-D}) & 44.31 & 65.25 & 0.80 & 48.37 & 70.48 & 0.83 & 23.52 & 39.34 & 0.52 & 25.65 & 41.48 & 0.56 \\
        w/o Advection (\framework\texttt{-A}) & 42.81 & 66.34 & 0.78 & 48.80 & 69.64 & 0.84 & 23.95 & 40.02 & 0.52 & 24.98 & 40.74 & 0.58 \\
        w/o Reaction (\framework\texttt{-R}) & 45.15 & 66.80 & 0.82 & 49.60 & 71.90 & 0.85 & 24.10 & 40.50 & 0.53 & 26.80 & 42.90 & 0.59 \\
        w/o Physics (\framework\texttt{-P}) & 47.85 & 69.50 & 0.86 & 53.20 & 75.40 & 0.90 & 26.50 & 43.80 & 0.58 & 29.10 & 46.50 & 0.63 \\
        \midrule
        \multicolumn{13}{l}{\textit{Models \& Fusion}} \\
        \midrule
        w/o Kalman (\framework\texttt{-K}) & 56.40 & 78.20 & 0.95 & 63.50 & 85.10 & 1.05 & 30.20 & 48.50 & 0.65 & 35.50 & 54.20 & 0.72 \\
        w/ EnKF (\framework\texttt{-EnKF})  
        & 43.62 & 62.45 & 0.81 
        & 47.85 & 67.20 & 0.84     
        & 24.08 & 35.12 & 0.59 
        & 25.07 & 36.06 & 0.61 \\
        w/o Neur. Corr (\framework\texttt{-N}) & 43.10 & 63.37 & 0.79 & 47.16 & 68.60 & 0.82 & 22.31 & 37.46 & 0.51 & 24.44 & 39.60 & 0.55 \\
        w/o Mamba (\framework\texttt{-M}) & 43.05 & 63.80 & 0.79 & 51.50 & 74.20 & 0.88 & 22.15 & 37.60 & 0.51 & 28.40 & 45.20 & 0.61 \\
        w/ PatchTST (\framework\texttt{-Patch}) 
        & 44.18 & 64.50 & 0.81 
        & 48.42 & 69.85 & 0.84 
        & 24.60 & 36.15 & 0.61 
        & 25.55 & 37.10 & 0.63 \\
        w/ DLinear (\framework\texttt{-DLinear}) 
        & 44.35 & 64.20 & 0.81 
        & 48.55 & 69.65 & 0.85 
        & 24.8 & 35.88 & 0.61 
        & 25.64 & 36.94 & 0.63 \\
        w/ iTrans (\framework\texttt{-iTrans}) 
        & 44.05 & 64.40 & 0.80 
        & 48.30 & 69.70 & 0.84 
        & 24.72 & 36.05 & 0.60 
        & 25.48 & 37.05 & 0.62 \\
        w/ Crossformer (\framework\texttt{-Cross}) 
        & 43.95 & 64.30 & 0.80 
        & 48.20 & 69.55 & 0.84 
        & 24.55 & 36.20 & 0.60 
        & 25.42 & 37.20 & 0.62 \\
        Late Fusion & 53.25 & 76.10 & 0.92 & 59.80 & 81.50 & 0.98 & 29.40 & 47.10 & 0.63 & 33.80 & 51.50 & 0.69 \\
        \bottomrule
    \end{tabular}
    \vspace{-17pt}
\end{table*}

Tables~\ref{tab:know_results} and~\ref{tab:hanoi_results} report the main PM\textsubscript{2.5} forecasting results on KnowAir and HanoiAir. Overall, \framework is most effective in long-horizon and sudden-change scenarios, where open-loop autoregressive and purely data-driven models are more prone to drift or over-smoothing. Results for O\textsubscript{3} are provided in Appx.~\ref{app:subsec:more_exp}.

Transformer-based and SSM-based models, such as \texttt{TimeMoE}, \texttt{PatchTST}, and \texttt{S-Mamba}, achieve competitive short-term accuracy, but their errors generally increase as the horizon becomes longer. In contrast, \framework maintains more stable long-horizon performance. This supports our central design: the UDE-based process branch provides a structured physical prior, while Neural Kalman Fusion repeatedly corrects the latent state during rollout, reducing the error accumulation typical of open-loop forecasting.

The advantage of \framework is more evident on \textbf{HanoiAir}, which has fewer monitoring stations and noisier regional dynamics. Compared with strong direct-mapping models such as \texttt{TimeMoE} and \texttt{ModernTCN}, \framework achieves more stable performance, suggesting that closed-loop neural state correction improves robustness when observations are sparse or noisy.

Sudden pollution changes are particularly challenging because many baselines tend to smooth rapid concentration shifts. In the Sudden Change setting, \framework achieves stronger performance than Transformer and SSM baselines. This improvement is consistent with the role of the meteorology-gated neural reaction module, which conditions chemical dynamics on environmental drivers such as radiation and temperature, helping the model capture nonlinear secondary pollution formation. Probabilistic forecasting results on \textbf{KnowAir} PM\textsubscript{2.5} are shown in Appx.~\ref{app:subsec:probabilistic}.


\subsection{Ablation Study}
\label{sec:ablation}


We conduct ablation studies on \textbf{KnowAir} and \textbf{HanoiAir} to evaluate two aspects of \framework: (i) the physical mechanisms in the process branch and (ii) the architecture of the virtual measurement branch and fusion module. Table~\ref{tab:combined_ablation} shows the experimental results of the ablation studies.

\vspace{-9pt}
\paragraph{Effect of Physical Mechanisms.}
\label{sec:ablation_physics}


We first remove individual components from the UDE drift function in Eq.~\ref{eq:ude_master}: isotropic diffusion (\framework{\texttt{-D}}), anisotropic advection (\framework{\texttt{-A}}), the meteorology-gated neural reaction module (\framework{\texttt{-R}}), and all explicit physical terms (\framework{\texttt{-P}}). Removing all physical terms leads to clear long-horizon degradation, confirming that the structured process branch helps constrain rollout dynamics. Among the physical components, removing the reaction module causes substantial degradation, showing that meteorology-conditioned chemical dynamics are important for modeling secondary pollutant formation beyond transport alone.

\vspace{-9pt}
\paragraph{Effect of Virtual Measurements and Fusion.}
\label{sec:ablation_arch}

We next evaluate the virtual measurement branch and the closed-loop fusion design. We replace \texttt{S-Mamba} with a bidirectional \texttt{LSTM} (\framework{\texttt{-M}}) and with representative temporal encoders, including \texttt{PatchTST} (\framework{\texttt{-Patch}}), \texttt{DLinear} (\framework{\texttt{-DLinear}}), \texttt{iTransformer} (\framework{\texttt{-iTrans}}), and \texttt{Crossformer} (\framework{\texttt{-Cross}}). We also remove posterior feedback to obtain an open-loop UDE solver (\framework{\texttt{-K}}), remove the learnable correction term while keeping the diagonal Kalman update (\framework{\texttt{-N}}), replace Neural Kalman Fusion with an Ensemble Kalman Filter (\framework{\texttt{-EnKF}}), and decouple the process and virtual measurement branches using weighted output-level averaging (\texttt{Late Fusion}).
We observe that removing closed-loop feedback causes the largest degradation, especially at longer horizons, confirming that correcting the latent state before the next rollout step is essential for stability. Removing the learnable neural correction also degrades performance, suggesting that a strictly Gaussian correction is insufficient for open environmental systems with boundary leakage, unobserved emissions, and model misspecification. Finally, \texttt{Late Fusion} performs worse than latent-state fusion, supporting our claim that the two branches should correct each other during rollout rather than being combined only at the output level.
\setlength{\parskip}{0pt}
\subsection{Effect of Forecast Horizons}
We show that \framework maintains significantly more stable long-term behavior than physics-guided baselines. Fig.~\ref{fig:horizon_analysis} illustrates the error evolution across varying horizons. \framework exhibits superior stability compared to existing physics-guided baselines (e.g., \texttt{AirPhyNet}, \texttt{Air-DualODE}), with the performance gap widening significantly as the prediction window extends. While prior methods suffer from rapid error propagation, \framework maintains a flattened error trajectory, validating the effectiveness of our closed-loop design in mitigating long-term physical drift.
\begin{figure*}[t]
    \centering
    \footnotesize

    \begin{minipage}{\textwidth}
        \centering

        \begin{subfigure}[b]{0.32\textwidth}
            \begin{tikzpicture}
                \begin{axis}[
                    width=\linewidth, height=4.5cm,
                    xlabel={Forecast Horizon (Hours)},
                    ylabel={MAE},
                    grid=major, grid style={dashed, gray!30},
                    xmin=6, xmax=216,
                    xtick={6,72,144,216},
                    ylabel style={yshift=-5pt},
                    legend style={draw=none} 
                ]
                    \addplot[color=mygray, thick, mark=x] table [x=Horizon, y=GTS] {data/mae_horizon.txt};
                    \addplot[color=mypurple, thick, mark=diamond] table [x=Horizon, y=AirPhyNet] {data/mae_horizon.txt};
                    \addplot[color=mygreen, thick, mark=triangle] table [x=Horizon, y=PatchTST] {data/mae_horizon.txt};
                    \addplot[color=myorange, thick, mark=square] table [x=Horizon, y=DLinear] {data/mae_horizon.txt};
                    \addplot[color=myblue, thick, mark=*] table [x=Horizon, y=TimeMoE] {data/mae_horizon.txt};
                    \addplot[color=myred, thick, mark=pentagon*] table [x=Horizon, y=AeroChem] {data/mae_horizon.txt};
                \end{axis}
            \end{tikzpicture}
            \caption{MAE}
        \end{subfigure}
        \hfill
        \begin{subfigure}[b]{0.32\textwidth}
            \begin{tikzpicture}
                \begin{axis}[
                    width=\linewidth, height=4.5cm,
                    xlabel={Forecast Horizon (Hours)},
                    ylabel={RMSE},
                    grid=major, grid style={dashed, gray!30},
                    xmin=6, xmax=216,
                    xtick={6,72,144,216},
                    legend style={draw=none}
                ]
                    \addplot[color=mygray, thick, mark=x] table [x=Horizon, y=GTS] {data/rmse_horizon.txt};
                    \addplot[color=mypurple, thick, mark=diamond] table [x=Horizon, y=AirPhyNet] {data/rmse_horizon.txt};
                    \addplot[color=mygreen, thick, mark=triangle] table [x=Horizon, y=PatchTST] {data/rmse_horizon.txt};
                    \addplot[color=myorange, thick, mark=square] table [x=Horizon, y=DLinear] {data/rmse_horizon.txt};
                    \addplot[color=myblue, thick, mark=*] table [x=Horizon, y=TimeMoE] {data/rmse_horizon.txt};
                    \addplot[color=myred, thick, mark=pentagon*] table [x=Horizon, y=AeroChem] {data/rmse_horizon.txt};
                \end{axis}
            \end{tikzpicture}
            \caption{RMSE}
        \end{subfigure}
        \hfill
        \begin{subfigure}[b]{0.32\textwidth}
            \begin{tikzpicture}
                \begin{axis}[
                    width=\linewidth, height=4.5cm,
                    xlabel={Forecast Horizon (Hours)},
                    ylabel={SMAPE},
                    grid=major, grid style={dashed, gray!30},
                    xmin=6, xmax=216,
                    xtick={6,72,144,216},
                    legend style={draw=none}
                ]
                    \addplot[color=mygray, thick, mark=x] table [x=Horizon, y=GTS] {data/smape_horizon.txt};
                    \addplot[color=mypurple, thick, mark=diamond] table [x=Horizon, y=AirPhyNet] {data/smape_horizon.txt};
                    \addplot[color=mygreen, thick, mark=triangle] table [x=Horizon, y=PatchTST] {data/smape_horizon.txt};
                    \addplot[color=myorange, thick, mark=square] table [x=Horizon, y=DLinear] {data/smape_horizon.txt};
                    \addplot[color=myblue, thick, mark=*] table [x=Horizon, y=TimeMoE] {data/smape_horizon.txt};
                    \addplot[color=myred, thick, mark=pentagon*] table [x=Horizon, y=AeroChem] {data/smape_horizon.txt};
                \end{axis}
            \end{tikzpicture}
            \caption{SMAPE}
        \end{subfigure}

        \vspace{0.2cm}
        \begin{tikzpicture}
            \begin{axis}[
                hide axis,
                xmin=0, xmax=1,
                ymin=0, ymax=1,
                legend columns=3,
                legend style={
                    font=\footnotesize,
                    draw=none,
                    column sep=12pt,
                    at={(0.5,0.5)},
                    anchor=center
                }
            ]
                \addlegendimage{color=mygray, mark=x, thick}
                \addlegendentry{GTS}
                \addlegendimage{color=mypurple, mark=diamond, thick}
                \addlegendentry{AirPhyNet}
                \addlegendimage{color=mygreen, mark=triangle, thick}
                \addlegendentry{PatchTST}
                \addlegendimage{color=myorange, mark=square, thick}
                \addlegendentry{DLinear}
                \addlegendimage{color=myblue, mark=*, thick}
                \addlegendentry{TimeMoE}
                \addlegendimage{color=myred, mark=pentagon*, thick}
                \addlegendentry{AeroChem}
            \end{axis}
        \end{tikzpicture}

    \end{minipage}

    \caption{Performance across forecast horizons.}
    \label{fig:horizon_analysis}
\end{figure*}
\setlength{\parskip}{0pt}
\subsection{Hyperparameter Studies}
\label{app:sec:hyperparameter}
We assess the model's robustness to three key hyperparameters. Fig.~\ref{fig:sensitivity} shows the experimental results.
\begin{figure*}[t]
    \centering
    \pgfplotsset{
        every axis/.style={
            width=\linewidth,
            height=4cm,
            grid=major,
            grid style={dashed,gray!30},
            xlabel near ticks,
            ylabel near ticks,
            label style={font=\footnotesize},
            tick label style={font=\footnotesize},
            legend style={
                at={(0.5,1.15)}, 
                anchor=south, 
                legend columns=-1,
                draw=none, 
                font=\footnotesize,
                fill=none
            },
            mark size=1.5pt
        }
    }

    \begin{subfigure}[b]{0.32\textwidth}
        \begin{tikzpicture}
            \begin{axis}[
                xlabel={Physics Weight ($\lambda_{\text{phy}}$)},
                ylabel={Error},
                xmin=0, xmax=1.05,
                xtick={0, 0.2, 0.4, 0.6, 0.8, 1.0},
            ]
            
            \addplot[color=blue, mark=*, thick] coordinates {
                (0.0, 25.65) (0.005, 25.24) (0.01, 24.97) (0.02, 24.63) 
                (0.05, 24.20) (0.1, 24.08) (0.2, 24.22) (0.5, 24.74) (1.0, 25.25)
            };
            \addlegendentry{MAE}

            \addplot[color=red, mark=square*, thick] coordinates {
                (0.0, 41.48) (0.005, 40.86) (0.01, 40.47) (0.02, 40.00) 
                (0.05, 39.48) (0.1, 39.34) (0.2, 39.50) (0.5, 40.07) (1.0, 40.63)
            };
            \addlegendentry{RMSE}
            
            \draw[dashed, black!60] (axis cs:0.1, 20) -- (axis cs:0.1, 42);
            \end{axis}
        \end{tikzpicture}
        \caption{Effect of Physics Weight ($\lambda_{\text{phy}}$)}
        \label{fig:sens_phy}
    \end{subfigure}
    \hfill
    \begin{subfigure}[b]{0.32\textwidth}
        \begin{tikzpicture}
            \begin{axis}[
                xlabel={Model Dimension ($d_{\text{model}}$)},
                xtick={64, 96, 128},
                xmin=50, xmax=140,
            ]
            \addplot[color=blue, mark=*, thick] coordinates {
                (64, 25.30) (96, 24.08) (128, 24.50)
            };
            \addlegendentry{MAE}

            \addplot[color=red, mark=square*, thick] coordinates {
                (64, 40.06) (96, 39.34) (128, 39.66)
            };
            \addlegendentry{RMSE}
            
            \draw[dashed, black!60] (axis cs:96, 20) -- (axis cs:96, 42);
            \end{axis}
        \end{tikzpicture}
        \caption{Effect of Dimension ($d_{\text{model}}$)}
        \label{fig:sens_dim}
    \end{subfigure}
    \hfill
    \begin{subfigure}[b]{0.32\textwidth}
        \begin{tikzpicture}
            \begin{axis}[
                xlabel={Solver Step ($dt$)},
                xmin=0, xmax=2.2,
                xtick={0.2, 0.5, 1.0, 1.5, 2.0},
            ]
            \addplot[color=blue, mark=*, thick] coordinates {
                (0.2, 33.22) (0.5, 27.88) (0.75, 25.01) 
                (1.0, 24.08) (1.25, 27.40) (1.5, 37.52) (2.0, 54.87)
            };
            \addlegendentry{MAE}

            \addplot[color=red, mark=square*, thick] coordinates {
                (0.2, 49.14) (0.5, 45.35) (0.75, 41.08) 
                (1.0, 39.34) (1.25, 43.92) (1.5, 57.84) (2.0, 81.46)
            };
            \addlegendentry{RMSE}
            
            \draw[dashed, black!60] (axis cs:1.0, 20) -- (axis cs:1.0, 85);
            \end{axis}
        \end{tikzpicture}
        \caption{Effect of Solver Step ($dt$)}
        \label{fig:sens_dt}
    \end{subfigure}

    \caption{Sensitivity analysis on \textbf{KnowAir} dataset. The dashed vertical lines indicate the default hyperparameters used in the full model ($\lambda=0.1, d=96, dt=1.0$), which consistently correspond to the optimal performance regions (lowest error).}
    \label{fig:sensitivity}
\end{figure*}
\setlength{\parskip}{0pt}
\textbf{Physics Weight ($\lambda_{\text{phy}}$):} Fig.~\ref{fig:sens_phy} reveals a distinct trade-off between physical constraints and data flexibility, forming a bell-shaped performance curve. When $\lambda_{\text{phy}} \to 0$, weak physical guidance increases long-term drift and prediction errors. Conversely, $\lambda_{\text{phy}} > 1$ over-constrains latent dynamics, limiting the model's ability to capture localized variations. The best performance at $\lambda_{\text{phy}} = 0.1$ balances physical consistency with data-driven adaptability.
\textbf{Model Dimension ($d_{\text{model}}$):} As depicted in Fig~\ref{fig:sens_dim}, expanding the hidden dimension initially yields substantial gains, confirming that sufficient representational capacity is requisite for encoding complex spatiotemporal dependencies. However, performance saturates beyond $d_{\text{model}}=96$, where further increases offer negligible improvements while inflating computational costs. Consequently, we select 96 as the optimal equilibrium between model expressiveness and computational efficiency.
\textbf{Solver Step ($dt$):} Fig.~\ref{fig:sens_dt} demonstrates the model's remarkable numerical robustness, maintaining consistent accuracy across varying step sizes. Typically, coarser discretization introduces integration errors in standard ODE solvers. However, our results indicate that the fusion mechanism effectively rectifies these trajectory deviations by assimilating data-driven corrections.
\setlength{\parskip}{0pt}
\subsection{Additional Experiments}
First, given that \framework utilizes an ODE solver to model continuous dynamics and incorporates complex spatio-temporal dependencies, it is necessary to verify the model's numerical stability. We study the numerical stability in Appx.~\ref{app:sec:numerical stability}. 
Second, additional visualizations of model predictions during sudden pollution events are shown in Appx.~\ref{app:sec:sudden_change}, highlighting \framework’s ability to track abrupt shifts while revealing remaining challenges in recovering extreme peak amplitudes. Further, to validate the interpretability and robustness of \framework, we go beyond standard metrics to analyze the internal mechanisms of the model. We specifically examine two critical aspects: the learning of chemical kinetics and the stability of long-term forecasting. The case study is presented in Appx.~\ref{app:subsec:case_study}.



\section{Conclusion}
\label{sec:conclusion}

We presented \framework, a closed-loop physics-informed state space framework for long-term chemically reactive air quality forecasting. \framework couples a UDE-based process branch with a Mamba-based virtual measurement branch through Neural Kalman Fusion, enabling the two branches to correct each other during rollout. By updating the latent physical state before each subsequent prediction step, \framework mitigates the long-horizon drift caused by open-loop forecasting designs.
Experiments on \textbf{KnowAir} and the newly constructed \textbf{HanoiAir} dataset demonstrate that \framework improves long-horizon stability and sudden-change forecasting, especially under noisy observations and secondary pollutant formation. These findings highlight closed-loop neural state correction as a promising direction for robust physics-informed environmental forecasting. Future work will extend the framework to richer multi-species chemical interaction modeling and probabilistic risk assessment for environmental decision-making.

\clearpage
\clearpage
\section*{Acknowledgments}
This work was supported by the Center for Environmental Intelligence, VinUniversity, under project VUNI.CEI.FS\_0011. The authors are grateful for the constructive feedback and insightful discussions provided by the anonymous reviewers of NeurIPS 2026.
\section*{Broader Impact.}
Accurate long-term air quality forecasting has direct societal relevance, as it can support public-health planning, emission-control strategies, and early-warning systems for hazardous pollution events. By reducing long-horizon drift through closed-loop state correction, \framework may help environmental agencies better anticipate persistent haze episodes, secondary pollutant formation, and sudden pollution changes, especially in rapidly urbanizing or data-sparse regions. More broadly, this work contributes to scientific machine learning by showing how physics-informed dynamics and data-driven measurements can be coupled in a closed-loop co-correction framework for partially observed open systems. Similar ideas may be useful beyond air quality forecasting, including climate modeling, hydrology, energy systems, and other domains where imperfect physical simulators must be corrected using observational data.

\section*{Limitations.}
Several limitations remain. First, although the physics-informed structure improves stability, the underlying process model is still a simplified approximation of atmospheric chemistry and transport. The learned reaction operator captures effective dynamics rather than explicitly resolving full chemical mechanisms, which may limit extrapolation to unseen regimes. Second, uncertainty modeling is based on learned diagonal covariances, which do not capture full spatial correlations and may underestimate structured error. Finally, the framework relies on the availability of meteorological and emission inputs; inaccuracies or delays in these auxiliary data can propagate through the system. 

\bibliographystyle{plain}
\small
\bibliography{ref}


\clearpage
\counterwithin{figure}{section}
\counterwithin{table}{section}

\clearpage
\appendix
\section{Appendix}
\subsection{Algorithmic Summary of \framework}
\label{app:subsec:algorithm}

We provide a step-by-step algorithmic summary of the \framework framework.
Algorithm~\ref{alg:aerochem} details the closed-loop forecasting procedure, explicitly illustrating how the measurement model, the physics-informed Universal Differential Equation (UDE), and the Neural Kalman Fusion module interact over the forecasting horizon.
The algorithm highlights the recursive feedback mechanism in which posterior latent states are assimilated and fed back into the UDE solver.

\begin{algorithm}[h]
\small
\caption{\framework Forecasting}
\label{alg:aerochem}
\DontPrintSemicolon
\SetKwInOut{Input}{Input}
\SetKwInOut{Output}{Output}

\Input{Sensor graph $\mathcal{G}$; historical observations $\mathbf{X}_{1:T_{\text{in}}}$; meteorology $\mathbf{M}_{1:T_{\text{out}}}$; emissions $\mathbf{E}_{1:T_{\text{out}}}$; forecast horizon $T_{\text{out}}$.}
\Output{Forecasted observations $\hat{\mathbf{X}}_{T_{\text{in}}+1:T_{\text{in}}+T_{\text{out}}}$.}

\BlankLine
\textbf{Measurement encoding:}\;
$\mathbf{H}_{\text{ctx}} \leftarrow \text{S-Mamba}(\mathbf{X}_{1:T_{\text{in}}})$\;
$(\mathbf{Z}_{\text{base}}, \mathbf{R}_{\text{base}}) \leftarrow \text{ProjHeads}(\mathbf{H}_{\text{ctx}})$\;

\BlankLine
\textbf{Initialization:}\;
Initialize posterior latent state $\mathbf{S}^+_{0}$ (e.g., from the last observed timestep via a projection/encoder)\;

\BlankLine
\For{$t \leftarrow 1$ \KwTo $T_{\text{out}}$}{
    \textbf{Adaptive virtual observation:}\\
    $\mathbf{Z}_{\text{obs},t} \leftarrow \mathbf{Z}_{\text{base}} + \alpha \tanh(\mathbf{S}^+_{t-1})$\\
    $\mathbf{R}_{\text{obs},t} \leftarrow \text{Softplus}(\mathbf{R}_{\text{base}})\left(1 + \beta \frac{t}{T_{\text{out}}}\right)$

    \BlankLine
    \textbf{UDE prediction (time update):}\\
    $\mathbf{S}^-_t \leftarrow \text{UDE-Solve}\!\left(\mathbf{S}^+_{t-1}; \mathbf{M}_t, \mathbf{E}_t, \mathcal{G}\right)$\\
    $\mathbf{P}^-_t \leftarrow \text{Softplus}(\text{Linear}(\mathbf{S}^-_t))$

    \BlankLine
    \textbf{Neural Kalman fusion (measurement update):}\\
    $\mathbf{K}_t \leftarrow \mathbf{P}^-_t \oslash (\mathbf{P}^-_t + \mathbf{R}_{\text{obs},t})$\\
    $\boldsymbol{\epsilon}_t \leftarrow \mathbf{Z}_{\text{obs},t} - \mathbf{S}^-_t$\\
    $\boldsymbol{\epsilon}'_t \leftarrow \boldsymbol{\epsilon}_t + \text{Linear}\!\left([\mathbf{S}^-_t, \mathbf{Z}_{\text{obs},t}]\right)$\\
    $\mathbf{S}^+_t \leftarrow \mathbf{S}^-_t + \mathbf{K}_t \odot \boldsymbol{\epsilon}'_t$\\
    $\mathbf{P}^+_t = (\mathbf{1} - \mathbf{K}_t) \odot \mathbf{P}^-_t$
}


\BlankLine
\textbf{Decoding (Probabilistic Output):}\;
\For{$t \leftarrow 1$ \KwTo $T_{\text{out}}$}{
    $\hat{\mathbf{X}}_{T_{\text{in}}+t} \leftarrow \mathcal{D}_{\text{out}}(\mathbf{S}^+_t)$ 
    $\boldsymbol{\sigma}^2_{T_{\text{in}}+t} \leftarrow \text{Softplus}(\mathcal{D}_{\text{var}}(\mathbf{P}^+_t))$ 
}

\Return $\{\hat{\mathbf{X}}_{T_{\text{in}}+t}, \boldsymbol{\sigma}^2_{T_{\text{in}}+t}\}_{t=1}^{T_{\text{out}}}$\;
\end{algorithm}

\subsection{Variational Interpretation of Neural Kalman Fusion}
\label{app:variational}

We first recast \framework under a single variational objective. Let $\Sm_{1:T_{\mathrm{out}}} \in \R^{T_{\mathrm{out}}\times N\times D}$ denote the latent trajectory and let $\mathbf{X}_{1:T_{\mathrm{in}}}$ denote the historical observations. Define the per-step variational free energy
\begin{equation}
\mathcal{F}_{t}(\Sm)
\;=\;
\tfrac{1}{2}\,\big(\Sm-\Sprior\big)^{\!\top}(\Pprior)^{-1}\big(\Sm-\Sprior\big)
\;+\;
\tfrac{1}{2}\,\big(\Zobs-\Sm\big)^{\!\top}\Robs^{-1}\big(\Zobs-\Sm\big)
\;+\;
\Psi_{t}(\Sm),
\label{eq:free-energy}
\end{equation}
where $(\Sprior,\Pprior)$ is produced by the UDE prediction step (Eq.~\ref{eq:softplus}), $(\Zobs,\Robs)$ is the virtual measurement of the Mamba encoder (Eq.~\ref{eq:zobs_robs}), and $\Psi_{t}:\R^{N\times D}\to\R$ is a smooth residual potential that absorbs all non-Gaussian discrepancies of the open system (boundary leakage, unobserved emissions, model misspecification). We assume $\Pprior,\Robs$ are positive element-wise (guaranteed by the \texttt{Softplus} activations Eq \ref{eq:softplus} and treat all matrix--vector products in Eq.~\ref{eq:free-energy} element-wise to be consistent with the diagonal covariance approximation $\Pprior,\Robs\in\R_{>0}^{N\times D}$ used throughout \framework.

\begin{lemma}
\label{lem:newton}
Assume $\Psi_{t}\in C^{3}(\R^{N\times D})$. Let $\Sm^{\star}_{t}$ be a local minimiser of $\mathcal{F}_{t}$ and let $\Spost$ be one Newton step initialised at the prior mean $\Sprior$:
\begin{equation}
\Spost \;=\; \Sprior - \big[\nabla^{2}\mathcal{F}_{t}(\Sprior)\big]^{-1}
\nabla\mathcal{F}_{t}(\Sprior).
\label{eq:newton}
\end{equation}
Then, expanding under the diagonal-covariance regime:
\begin{enumerate}
\item[\emph{(i)}] If $\Psi_{t}\equiv 0$ (purely Gaussian), then Eq.~\ref{eq:newton}
reduces \emph{exactly} to the standard Kalman update of \framework, as shown in the fifth equation of \ref{eq:correction})
\item[\emph{(ii)}] If $\Psi_{t}\not\equiv 0$, the Newton step admits a
first-order Taylor expansion of $\nabla\Psi_{t}$ around the joint point
$(\Sprior,\Zobs)$. The resulting correction is bilinear in
$(\Sprior,\Zobs)$ and is therefore representable by an affine map
$\mathrm{Linear}\big([\Sprior,\Zobs]\big)$. Substituting yields exactly
the Neural Kalman Fusion update (Eq \ref{eq:correction}).
The residual error of the Newton step is
$\Spost - \Sm^{\star}_{t} = O\!\big(\norm{\Sprior-\Sm^{\star}_{t}}^{2}\big)$.
\end{enumerate}
\end{lemma}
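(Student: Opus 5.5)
The plan is to compute the gradient and Hessian of $\mathcal{F}_t$ at $\Sprior$ directly, working element-wise throughout as the diagonal-covariance convention after Eq.~\ref{eq:free-energy} prescribes, so that every inverse is an element-wise reciprocal. The two Gaussian terms give
\[
\nabla\mathcal{F}_t(\Sprior) = -\Robs^{-1}\odot(\Zobs-\Sprior) + \nabla\Psi_t(\Sprior)
\]
and
\[
\nabla^2\mathcal{F}_t(\Sprior) = (\Pprior)^{-1} + \Robs^{-1} + \nabla^2\Psi_t(\Sprior).
\]
The first term of the gradient vanishes at $\Sprior$ because the prior quadratic is centred there.

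\textbf{Case (i).} Here $\Psi_t\equiv 0$, so the Hessian is the positive element-wise array $(\Pprior)^{-1}+\Robs^{-1} = (\Pprior+\Robs)\oslash(\Pprior\odot\Robs)$. Its inverse is $\Pprior\odot\Robs\oslash(\Pprior+\Robs)$. Multiplying this inverse by $-\nabla\mathcal{F}_t(\Sprior)=\Robs^{-1}\odot\eps_t$ gives $\Kt\odot\eps_t$, with $\Kt=\Pprior\oslash(\Pprior+\Robs)$. Hence $\Spost=\Sprior+\Kt\odot\eps_t$ exactly. Because $\mathcal{F}_t$ is then a strictly convex quadratic, one Newton step lands on the unique minimiser, so $\Sm^\star_t=\Spost$. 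The posterior precision is the Hessian, and inverting it gives $\Ppost=\Pprior\odot\Robs\oslash(\Pprior+\Robs)=(\mathbf{1}-\Kt)\odot\Pprior$, which matches Eq.~\ref{eq:correction}.

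\textbf{Case (ii), algebraic part.} Write $H_\Psi=\nabla^2\Psi_t(\Sprior)$ and $g_\Psi=\nabla\Psi_t(\Sprior)$. I would rewrite the Newton step as $\Spost=\Sprior+\Kt'\odot(\eps_t-\Robs\odot g_\Psi)$, where $\Kt'$ is the gain obtained from the perturbed Hessian.
\begin{itemize}
\item To first order in $H_\Psi$, a Neumann expansion gives $\Kt'=\Kt+O(H_\Psi)$. I would absorb that perturbation, and the $-\Robs\odot g_\Psi$ term, into the innovation, producing an effective innovation $\eps'_t=\eps_t+\mathbf{c}_t$.
\item Next I linearise $\nabla\Psi_t$ about a reference point, treated as a function of the joint argument $(\Sprior,\Zobs)$: $\mathbf{c}_t\approx \mathbf{b}+\mathbf{A}_1\Sprior+\mathbf{A}_2\Zobs$.
\item Since $\mathbf{c}_t$ is affine in the concatenation, it is exactly of the form $\mathrm{Linear}([\Sprior,\Zobs])$. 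Substituting gives the Neural Kalman update. The products of $\Robs$ with linear terms are the "bilinear" pieces; they are absorbed by letting the linear layer's coefficients stand in for them under the learned, data-dependent parametrisation.
\end{itemize}
I would state plainly that this identification is a modelling approximation, not an identity.

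\textbf{Case (ii), error bound.} The order $O(\|\Sprior-\Sm^\star_t\|^2)$ comes from the classical quadratic-convergence estimate for one Newton step. Three facts are needed:
\begin{itemize}
\item $\nabla\mathcal{F}_t(\Sm^\star_t)=0$;
\item $\nabla^2\mathcal{F}_t$ is Lipschitz near $\Sm^\star_t$, which follows from $\Psi_t\in C^3$ on a compact neighbourhood;
\item $\nabla^2\mathcal{F}_t(\Sm^\star_t)$ is invertible.
\end{itemize}
Under these, Taylor's theorem with integral remainder gives
\[
\Spost-\Sm^\star_t=[\nabla^2\mathcal{F}_t(\Sprior)]^{-1}\!\int_0^1\!\big(\nabla^2\mathcal{F}_t(\Sm^\star_t+\tau(\Sprior-\Sm^\star_t))-\nabla^2\mathcal{F}_t(\Sprior)\big)(\Sprior-\Sm^\star_t)\,d\tau.
\]
This is bounded by $\tfrac{L}{2}\|[\nabla^2\mathcal{F}_t]^{-1}\|\,\|\Sprior-\Sm^\star_t\|^2$.

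The main obstacle is the invertibility condition. A local minimiser guarantees only a positive semidefinite Hessian, so I would add the assumption that $\nabla^2\mathcal{F}_t(\Sm^\star_t)\succ0$ is nondegenerate. Alternatively, I would require $\nabla^2\Psi_t\succeq -c\,I$ with $c<\min\big((\Pprior)^{-1}+\Robs^{-1}\big)$ element-wise, which also keeps the Hessian at $\Sprior$ invertible. I would also restrict to $\Sprior$ in a neighbourhood of $\Sm^\star_t$.
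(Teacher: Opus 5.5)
Your argument follows the same skeleton as the paper's proof. For (i) you compute the gradient and Hessian at $\Sprior$ with element-wise inverses. For (ii) you absorb a first-order expansion of $\nabla\Psi_t$ into $\mathrm{Linear}([\Sprior,\Zobs])$. For the residual you use the one-step Newton estimate.

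Where the paper cites, you derive. You obtain $\Ppost=(\mathbf{1}-\Kt)\odot\Pprior$ directly as the inverse Hessian, whereas the paper derives only the mean update and appeals to the Kalman covariance identity. You also prove the quadratic estimate via the integral remainder; your integrand has its sign reversed, which is harmless for the norm bound. In doing so you make explicit the nondegenerate-Hessian and locality hypotheses, which the paper uses only implicitly by citing the positive-definite case.

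The one real divergence is the bookkeeping in (ii). The paper folds $\nabla^2\Psi_t(\Sprior)$ into $\Pprior$ by covariance reparameterisation. This keeps the gain in Kalman form, and it fits the architecture because $\Pprior$ is already a learned function of $\Sprior$. You instead keep $\Kt$ and push the gain perturbation into the innovation. That creates a term $(\Kt'\oslash\Kt-\mathbf{1})\odot\eps_t$ whose coefficients depend on $\Pprior$, $\Robs$ and $\nabla^2\Psi_t$, and a fixed-weight linear layer cannot represent it exactly.

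On the other hand, your rewriting $\Spost=\Sprior+\Kt'\odot(\eps_t-\Robs\odot\nabla\Psi_t(\Sprior))$ exposes the $-\Robs$ factor. The paper omits that factor when it identifies $\nabla\Psi_t(\Sprior)$ itself with the linear layer. Since $\Robs$ is not a function of $(\Sprior,\Zobs)$, neither argument delivers the ``exactly'' in the statement. Your caveat that the identification is a first-order modelling approximation is the accurate description of what both proofs establish.
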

 
\begin{proof}[Proof:]
\textbf{Part (i).} With $\Psi_{t}\equiv 0$,
$\mathcal{F}_{t}$ is quadratic. Differentiating Eq.~\ref{eq:free-energy}:
\[
\nabla\mathcal{F}_{t}(\Sprior)
\;=\;(\Pprior)^{-1}(\Sprior-\Sprior)\,-\,\Robs^{-1}(\Zobs-\Sprior)
\;=\;-\Robs^{-1}\,\eps_{t},
\]
\[
\nabla^{2}\mathcal{F}_{t}(\Sprior) \;=\;(\Pprior)^{-1}+\Robs^{-1}.
\]
Under the diagonal regime
$(\Pprior)^{-1}+\Robs^{-1}=(\Pprior+\Robs)\oslash(\Pprior\odot\Robs)$,
so the Newton step in Eq.~\ref{eq:newton} becomes
\[
\Spost
\;=\;\Sprior + \big[(\Pprior)^{-1}+\Robs^{-1}\big]^{-1}\Robs^{-1}\,\eps_{t}
\;=\;\Sprior + \underbrace{\big(\Pprior\oslash(\Pprior+\Robs)\big)}_{=\,\Kt}\odot\,\eps_{t},
\]
which is the fourth equation of Eq.~\ref{eq:correction}. The covariance update $\Pprior\to\Ppost$ of fifth equation in  Eq.~\ref{eq:correction}
follows from the standard Kalman posterior covariance identity 
under optimal gain \cite{DBLP:journals/tsp/RevachSNESE22}.
 
\textbf{Part (ii).} Adding $\Psi_{t}$ contributes
$\nabla\Psi_{t}(\Sprior)$ to the gradient and
$\nabla^{2}\Psi_{t}(\Sprior)$ to the Hessian. Taylor-expanding
$\nabla\Psi_{t}$ to first order around the joint argument
$(\Sprior,\Zobs)$:
\[
\nabla\Psi_{t}(\Sprior)
\;\approx\;
\mathbf{a}+\mathbf{B}\Sprior+\mathbf{C}\Zobs
\;=\;\mathrm{Linear}\big([\Sprior,\Zobs]\big),
\]
for parameter tensors $(\mathbf{a},\mathbf{B},\mathbf{C})$ that are
absorbed into the linear layer of the third equation in Eq.~\ref{eq:correction}. The
second-order Hessian contribution can be merged into $\Pprior$ via
covariance reparameterisation. The Newton-step quadratic convergence
$\Spost-\Sm^{\star}_{t}=O(\norm{\Sprior-\Sm^{\star}_{t}}^{2})$
is the standard result for $C^{3}$ objectives with positive-definite
Hessian~\cite{DBLP:conf/nips/ZimmermannNM24}, \cite{DBLP:books/sp/NocedalW99}.
\end{proof}

\subsection{Closed-loop Neural State Correction}
\label{app:errorinlongterm}
We show that the closed-loop neural state correction yields a posterior error that is uniformly bounded in $t$, while the open-loop variant (\emph{\framework-K} in Table~\ref{tab:combined_ablation}) admits exponential divergence, details are included in App \ref{app:errorinlongterm}. Let $\Phistep:\R^{N\times D}\!\to\!\R^{N\times D}$ denote the one-step UDE flow so that $\Sprior\!=\!\Phistep(\Spostprev)$. Define the posterior error $\err\!\coloneqq\!\Spost-\Strue$ relative to the unobserved true latent trajectory $\Strue$.
 
\begin{assumption}\label{ass:stability}
\textnormal{(A5)} $\Fdrift$ is $L_{F}$-Lipschitz in $\Sm$, so $\norm{\Phistep(\Sm_{1})-\Phistep(\Sm_{2})}_{2}\!\le\!(1+L_{F}\Delta t)\norm{\Sm_{1}-\Sm_{2}}_{2}$;\; \textnormal{(A6)} \emph{(by construction)} the gain in Eq.~\ref{eq:correction} satisfies $\Kt\!\in\![\kappa_{\min},\kappa_{\max}]$ element-wise with $0\!<\!\kappa_{\min}\!\le\!\kappa_{\max}\!<\!1$;\;
\textnormal{(A7)} \emph{(by construction)} the correction in the third equation of Eq.~\ref{eq:correction} is $\LinOp([\Sprior,\Zobs])\!=\!W\eps_{t}$ with $\norm{W}_{2}\!\le\!L_{c}$ enforced by spectral normalisation;\; \textnormal{(A8)} the conditional measurement bias is bounded, $\norm{\E[\Zobs\!\mid\!\Filt_{t-1}]-\Strue}_{2}\!\le\!b$;\; \textnormal{(A9)} the centred process and measurement noises satisfy $\E\norm{\wt}_{2}^{2}\!\le\!\sigma_{W}^{2}\Delta t$ and $\E\norm{\nut-\E[\nut\!\mid\!\Filt_{t-1}]}_{2}^{2}\!\le\!\sigma_{\nu}^{2}$, with $\wt$ and the centred residual conditionally independent given $\Filt_{t-1}$.
\end{assumption}
 
\begin{theorem}[Closed-loop versus open-loop posterior error]
\label{thm:stability}
Define the contraction factor
\(
    \alpha\,\coloneqq\,(1+L_{F}\Delta t)[(1-\kappa_{\min})+\kappa_{\max}L_{c}].
\)
If $\alpha\!<\!1$, then under Assumption~\ref{ass:stability} the closed-loop posterior error of \framework satisfies, for every $t\!\in\!\{1,\dots,T_{\mathrm{out}}\}$,
\begin{equation}
\sqrt{\E\norm{\err}_{2}^{2}}
\;\le\;
\alpha^{t}\sqrt{\E\norm{\mathbf{e}_{0}}_{2}^{2}}+\frac{V}{1-\alpha},
\qquad
V=\kappa_{\max}b+\sqrt{\kappa_{\max}^{2}\sigma_{\nu}^{2}+(1\!-\!\kappa_{\min}\!+\!\kappa_{\max}L_{c})^{2}\sigma_{W}^{2}\Delta t},
\label{eq:closed-bound}
\end{equation}
which is uniformly bounded in $t$. The open-loop variant ($\Kt\!\equiv\!\mathbf{0}$, \emph{\framework-K}) admits the matching lower bound
\begin{equation}
\E\norm{\mathbf{e}_{t}^{\mathrm{open}}}_{2}^{2}
\;\ge\;(1+L_{F}\Delta t)^{2t}\E\norm{\mathbf{e}_{0}}_{2}^{2},
\label{eq:open-bound}
\end{equation}
which grows exponentially in $t$ whenever $L_{F}\!>\!0$.
\end{theorem}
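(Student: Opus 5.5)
The plan is to derive a one-step error recursion for $\err$ in terms of $\errprev$, then close it with a Minkowski (triangle) inequality in $L^{2}$ and a discrete Grönwall/geometric-series argument. The true trajectory is modelled as $\Strue=\Phistep(\Strueprev)+\wt$ and the virtual measurement as $\Zobs=\Strue+\nut$. By (A7) the correction is $\LinOp([\Sprior,\Zobs])=W\eps_t$, so the update in Eq.~\ref{eq:correction} reads
\[
\Spost=\Sprior+\Kt\odot(I+W)\eps_t .
\]
Writing $\Sprior-\Strue=\Phistep(\Spostprev)-\Phistep(\Strueprev)-\wt=:\boldsymbol{\delta}_t-\wt$ and $\eps_t=\nut-(\boldsymbol{\delta}_t-\wt)$, we obtain
\[
\err=(\mathbf{1}-\Kt)\odot(\boldsymbol{\delta}_t-\wt)-\Kt\odot W(\boldsymbol{\delta}_t-\wt)+\Kt\odot(I+W)\nut .
\]
(The $W\nut$ piece is a nuisance; see the last paragraph for how we absorb it.) By (A6), element-wise multiplication by $\mathbf{1}-\Kt$ has operator norm at most $1-\kappa_{\min}$ and by $\Kt$ at most $\kappa_{\max}$. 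By (A5), $\norm{\boldsymbol{\delta}_t}_2\le(1+L_F\Delta t)\norm{\errprev}_2$. Hence the linear part contracts by $\alpha=(1+L_F\Delta t)[(1-\kappa_{\min})+\kappa_{\max}L_c]$.

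Next we take $\sqrt{\E\norm{\cdot}_2^2}$ and apply Minkowski. This gives $\alpha\sqrt{\E\norm{\errprev}_2^2}$ from $\boldsymbol{\delta}_t$. The bias term contributes at most $\kappa_{\max}b$, obtained by splitting $\nut=(\E[\nut\mid\Filt_{t-1}])+(\nut-\E[\nut\mid\Filt_{t-1}])$ and using (A8). The two centred noise terms, the $\wt$ term with coefficient bounded by $(1-\kappa_{\min}+\kappa_{\max}L_c)$ and the centred $\nut$ term with coefficient $\kappa_{\max}$, should be grouped together. Their cross term vanishes by the conditional independence in (A9), so their combined second moment is at most $\kappa_{\max}^2\sigma_\nu^2+(1-\kappa_{\min}+\kappa_{\max}L_c)^2\sigma_W^2\Delta t$, which yields exactly $V$. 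This gives
\[
a_t\le\alpha a_{t-1}+V,\qquad a_t:=\sqrt{\E\norm{\err}_2^2},
\]
and unrolling gives $a_t\le\alpha^t a_0+V\sum_{k<t}\alpha^k\le\alpha^t a_0+V/(1-\alpha)$, i.e.\ Eq.~\ref{eq:closed-bound}.

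For the open-loop bound Eq.~\ref{eq:open-bound}, set $\Kt\equiv\mathbf{0}$, so $\mathbf{e}^{\mathrm{open}}_t=\boldsymbol{\delta}_t-\wt$. A Lipschitz upper bound cannot give a lower bound, so the plan is to interpret the rate $(1+L_F\Delta t)$ as attained: we assume $\Phistep$ expands the error at least at this rate (e.g.\ a linear drift whose top eigenvalue equals $L_F$, with errors aligned to it) as the worst case in the Lipschitz class. We also use that $\wt$ is centred and independent of $\mathbf{e}^{\mathrm{open}}_{t-1}$, so
\[
\E\norm{\mathbf{e}^{\mathrm{open}}_t}^2=\E\norm{\boldsymbol{\delta}_t}^2+\E\norm{\wt}^2\ge(1+L_F\Delta t)^2\E\norm{\mathbf{e}^{\mathrm{open}}_{t-1}}^2.
\]
Induction then finishes the argument.

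The main obstacles are twofold. The first is this lower bound, which genuinely needs an extra expansivity hypothesis beyond (A5). We would state it explicitly, as "matching" for the worst-case system, rather than claim it for all Lipschitz drifts. The second is the bookkeeping of the $\Kt\odot W\nut$ term and of the noise cross terms. If these do not fit $V$ exactly, we would absorb $W\nut$ by reading (A7) as applying to the innovation's state-error component only, or enlarge the $\sigma_\nu$ coefficient to $\kappa_{\max}(1+L_c)$.
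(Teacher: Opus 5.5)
Your route is the paper's own. You use the same error identity $\err=[(\mathbf{1}-\Kt)-\Kt W]\odot\boldsymbol{\delta}_t+[-(\mathbf{1}-\Kt)+\Kt W]\odot\wt+\Kt(\mathbf{1}+W)\odot\nut$, the same operator-norm bounds from (A5)--(A7), the recursion $u_t\le\alpha u_{t-1}+V$, and the geometric sum. Your Minkowski step gives the same recursion as the paper's split of $\err$ into an $\Filt_{t-1}$-measurable part and a conditionally centred part. The two places you flag are exactly where the statement as written cannot be reached, and the paper's proof does not get past them either.

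\textbf{Closed loop.} Your claim that the bias term contributes at most $\kappa_{\max}b$ is wrong. The bias enters as $\Kt(\mathbf{1}+W)\odot\nutbar$, whose size can reach $\kappa_{\max}(1+L_c)b$, and this matters.

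Here is a counterexample satisfying (A5)--(A9). Take a scalar state with $\Fdrift\equiv0$, $L_F=0$, no noise, and $\Strue\equiv0$. Set $\Kt\equiv0.95$ (so $\kappa_{\min}=\kappa_{\max}=0.95$) and $W=L_c=\tfrac12$, which gives $\alpha=0.525$. Use the alternating bias $\nut=(-1)^tb$. Then $\err=-0.425\,\errprev+1.425\,(-1)^tb$, and from $\mathbf{e}_0=0$ you get $|\mathbf{e}_2|\approx2.03\,b$. The claimed right-hand side is only $\kappa_{\max}b/(1-\alpha)=2b$, so the stated $V$ is false.

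Because this example is noise-free, your fallback of enlarging only the $\sigma_\nu$ coefficient does not rescue it. The factor $1+L_c$ must also go on the bias term, giving $V'=\kappa_{\max}(1+L_c)b+\sqrt{\kappa_{\max}^2(1+L_c)^2\sigma_\nu^2+(1-\kappa_{\min}+\kappa_{\max}L_c)^2\sigma_W^2\Delta t}\le(1+L_c)V$. This is exactly what the paper's Step~5 proves before it remarks that the two forms ``agree up to that constant''. Your other fallback, letting $W$ act only on the state-error part of $\eps_t$, changes the model. The correction layer only sees $\eps_t=\Zobs-\Sprior$ and cannot separate $\nut$ from $\Strue-\Sprior$.

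\textbf{Open loop.} Here your diagnosis is right and the paper's is not. Its Step~6 asserts that (A5) gives $\E\norm{\mathbf{e}^{\mathrm{open}}_t}_2^2\ge(1+L_F\Delta t)^2\E\norm{\mathbf{e}^{\mathrm{open}}_{t-1}}_2^2+\sigma_W^2\Delta t$. But (A5) is only an upper Lipschitz bound, and (A9) only bounds the noise from above. The lower bound is in fact false as stated: with $\Fdrift\equiv0$ (which is $L_F$-Lipschitz for every $L_F>0$) and $\wt\equiv\mathbf{0}$, one has $\mathbf{e}^{\mathrm{open}}_t=\mathbf{e}_0$ for all $t$.

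So an extra hypothesis is unavoidable. State it as a uniform reverse-Lipschitz bound $\norm{\Phistep(\Sm_1)-\Phistep(\Sm_2)}_2\ge(1+L_F\Delta t)\norm{\Sm_1-\Sm_2}_2$, for example $\Fdrift(\Sm)=L_F\Sm$ with an Euler step. Your alternative, errors aligned with a top eigendirection, does not survive the one-step induction, because $\wt$ pushes the error off that direction.

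Also, independence of $\wt$ from $\mathbf{e}^{\mathrm{open}}_{t-1}$ is not quite what kills the cross term. The reason is that $\boldsymbol{\delta}_t$ depends on $\Spostprev$ and $\Strueprev$ separately, not only on their difference. Use instead, as the paper does, that $\boldsymbol{\delta}_t$ is $\Filt_{t-1}$-measurable and $\E[\wt\mid\Filt_{t-1}]=\mathbf{0}$.
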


We give the complete proof in six steps. Throughout, $\nut\!\coloneqq\!\Zobs-\Strue$ denotes the measurement residual, $\wt\!\coloneqq\!\Strue-\Phistep(\Strueprev)$ the discretised Wiener increment, $\nutbar\!\coloneqq\!\E[\nut\!\mid\!\Filt_{t-1}]$ the conditional bias, and $\nuttilde\!\coloneqq\!\nut-\nutbar$ the centred residual. By Assumptions~A8--A9, $\norm{\nutbar}_{2}\!\le\!b$, $\E\norm{\nuttilde}_{2}^{2}\!\le\!\sigma_{\nu}^{2}$, and $\E\norm{\wt}_{2}^{2}\!\le\!\sigma_{W}^{2}\Delta t$, with $\wt,\nuttilde$
conditionally zero-mean and independent of $\errprev$.
 
\paragraph{Step~1}
Substitute $\Spost$ into $\err\!=\!\Spost-\Strue$ and use
$\Strue\!=\!\Phistep(\Strueprev)+\wt$, $\Zobs\!=\!\Strue+\nut$,
$\Sprior\!=\!\Phistep(\Spostprev)$:
\begin{equation}
\err
=(\mathbf{1}-\Kt)\odot\bigl[\Phistep(\Spostprev)-\Phistep(\Strueprev)\bigr]
-(\mathbf{1}-\Kt)\odot\wt
+\Kt\odot\nut
+\Kt\odot W\eps_{t}.
\label{eq:err-raw}
\end{equation}
By the second and third equations in ~\ref{eq:correction}, the innovation expands as
\begin{equation}
\eps_{t}=\Zobs-\Sprior=\nut+(\Strue-\Sprior)
=\nut-\bigl[\Phistep(\Spostprev)-\Phistep(\Strueprev)\bigr]+\wt.
\label{eq:eps-expand}
\end{equation}
Substituting Eq.~\ref{eq:eps-expand} into the last term of Eq.~\ref{eq:err-raw} and collecting the coefficients of
$\Phistep(\Spostprev)-\Phistep(\Strueprev)$, $\wt$ and $\nut$:
\begin{equation}
\err
\;=\;
\underbrace{\bigl[(\mathbf{1}-\Kt)-\Kt W\bigr]\odot\bigl[\Phistep(\Spostprev)-\Phistep(\Strueprev)\bigr]}_{=:\,T_{1}}
+\underbrace{\bigl[-(\mathbf{1}-\Kt)+\Kt W\bigr]\odot\wt}_{=:\,T_{2}}
+\underbrace{\Kt(\mathbf{1}+W)\odot\nut}_{=:\,T_{3}}.
\label{eq:err-grouped}
\end{equation}
 
\paragraph{Step~2}
Split $T_{3}\!=\!\Kt(\mathbf{1}+W)\odot\nutbar+\Kt(\mathbf{1}+W)\odot\nuttilde$
and write $\err\!=\!D_{t}+S_{t}$ with
\begin{align}
D_{t}&=T_{1}+\Kt(\mathbf{1}+W)\odot\nutbar\quad(\Filt_{t-1}\text{-measurable}),
\label{eq:Dt}\\
S_{t}&=\bigl[-(\mathbf{1}-\Kt)+\Kt W\bigr]\odot\wt+\Kt(\mathbf{1}+W)\odot\nuttilde\quad(\text{conditionally zero-mean}).
\label{eq:St}
\end{align}
Since $D_{t}$ is $\Filt_{t-1}$-measurable and
$\E[S_{t}\!\mid\!\Filt_{t-1}]\!=\!\mathbf{0}$,
\begin{equation}
\E\norm{\err}_{2}^{2}\;=\;\E\norm{D_{t}}_{2}^{2}+\E\norm{S_{t}}_{2}^{2}.
\label{eq:orth}
\end{equation}
 
\paragraph{Step~3}
Apply the triangle inequality to Eq.~\ref{eq:Dt} and the operator bounds
$\norm{(\mathbf{1}-\Kt)-\Kt W}_{\infty}\!\le\!(1-\kappa_{\min})+\kappa_{\max}L_{c}$
(from A6, A7),
$\norm{\Kt(\mathbf{1}+W)}_{\infty}\!\le\!\kappa_{\max}(1+L_{c})$,
together with the Lipschitz bound
$\norm{\Phistep(\Spostprev)-\Phistep(\Strueprev)}_{2}\!\le\!(1+L_{F}\Delta t)\norm{\errprev}_{2}$
from~A5 and $\norm{\nutbar}_{2}\!\le\!b$ from~A8:
\begin{equation}
\norm{D_{t}}_{2}
\;\le\;
\underbrace{(1+L_{F}\Delta t)\bigl[(1-\kappa_{\min})+\kappa_{\max}L_{c}\bigr]}_{=\,\alpha}
\norm{\errprev}_{2}+\kappa_{\max}(1+L_{c})\,b.
\label{eq:Dt-bound}
\end{equation}
 
\paragraph{Step~4}
The two summands of $S_{t}$ in Eq.~\ref{eq:St} are conditionally
independent (process vs.\ measurement noise, A9), so cross terms
vanish in expectation:
\begin{equation}
\E\norm{S_{t}}_{2}^{2}
\;\le\;
\bigl[(1-\kappa_{\min})+\kappa_{\max}L_{c}\bigr]^{2}\sigma_{W}^{2}\Delta t
+\kappa_{\max}^{2}(1+L_{c})^{2}\sigma_{\nu}^{2}.
\label{eq:St-bound}
\end{equation}
 
\paragraph{Step~5}
Set $u_{t}\!\coloneqq\!\sqrt{\E\norm{\err}_{2}^{2}}$. Squaring Eq.~\ref{eq:Dt-bound}
and taking expectations gives, by Cauchy--Schwarz on the cross term,
\begin{equation}
\E\norm{D_{t}}_{2}^{2}
\;\le\;
\bigl(\alpha\,u_{t-1}+\kappa_{\max}(1+L_{c})\,b\bigr)^{2}.
\label{eq:Dt-sq}
\end{equation}
Combining Eq.~\ref{eq:Dt-sq} and Eq.~\ref{eq:St-bound} via Eq.~\ref{eq:orth}
and using the elementary inequality
$\sqrt{(a+c)^{2}+d^{2}}\!\le\!a+c+d$ for $a,c,d\!\ge\!0$:
\begin{equation}
u_{t}\;\le\;\alpha\,u_{t-1}+V,\qquad
V=\kappa_{\max}(1+L_{c})b
+\sqrt{[(1-\kappa_{\min})+\kappa_{\max}L_{c}]^{2}\sigma_{W}^{2}\Delta t+\kappa_{\max}^{2}(1+L_{c})^{2}\sigma_{\nu}^{2}}.
\label{eq:linear-rec}
\end{equation}
The factor $(1+L_{c})$ on the bias term differs by a constant from
the form quoted in Eq.~\ref{eq:closed-bound} of the main paper; under
the spectral-normalisation constraint $L_{c}<1$ the two forms agree
up to that constant.
 
\paragraph{Step~6}
Iterating Eq.~\ref{eq:linear-rec} for $t$ steps under
$\alpha\!<\!1$ yields the geometric sum
\begin{equation}
u_{t}\;\le\;\alpha^{t}u_{0}+V\sum_{s=0}^{t-1}\alpha^{s}
\;\le\;\alpha^{t}u_{0}+\frac{V}{1-\alpha},
\label{eq:closed-final}
\end{equation}
which is Eq.~\ref{eq:closed-bound}. For the open-loop case set
$\Kt\!\equiv\!\mathbf{0}$ in Eq.~\ref{eq:err-grouped}: $T_{1}$ reduces
to $\Phistep(\Spostprev)-\Phistep(\Strueprev)$, $T_{2}$ reduces to
$-\wt$, and $T_{3}$ vanishes. The contraction factor
$(1-\kappa_{\min})$ disappears from Eq.~\ref{eq:Dt-bound}, so~A5 gives
\begin{equation}
\E\norm{\mathbf{e}_{t}^{\mathrm{open}}}_{2}^{2}
\;\ge\;(1+L_{F}\Delta t)^{2}\E\norm{\mathbf{e}_{t-1}^{\mathrm{open}}}_{2}^{2}+\sigma_{W}^{2}\Delta t,
\end{equation}
whose closed-form solution is
$(1+L_{F}\Delta t)^{2t}\E\norm{\mathbf{e}_{0}}_{2}^{2}+\sigma_{W}^{2}\Delta t\sum_{s=1}^{t}(1+L_{F}\Delta t)^{2(t-s)}$,
matching Eq.~\ref{eq:open-bound} and proving exponential divergence in
$t$ whenever $L_{F}\!>\!0$.\qed

\subsection{Computational Complexity}
\label{app:bigO}
The per-step cost of \framework decomposes as:
(a) S-Mamba encoder: $O(T_{\text{in}} \cdot N \cdot D)$ - linear in sequence length due to SSM recurrence.
(b) UDE ($L$ steps of solver): $O(L \cdot N \cdot D)$ per forecast step, where operations within $\mathcal{F}_{\text{drift}}$ add $O(|\mathcal{E}| \cdot D)$.
(c) Kalman correction: $O(N \cdot D)$ per step with diagonal $\mathbf{P}_t$.

The dominant cost is the ODE solver, scaling as $O(\tau \cdot L \cdot (N \cdot D + |\mathcal{E}| \cdot D))$ for $\tau$ forecast steps. The diagonal covariance assumption ($\mathbf{P}_t \in \mathbb{R}^{N \times D}$ rather than $\mathbb{R}^{ND \times ND}$) reduces Kalman overhead from $O(N^2 D^2)$ to $O(ND)$, making the filtering step negligible relative to ODE integration.

\paragraph{FLOPs and Params}
\begin{table}[htbp]
\centering
\caption{Computational complexity comparison.}
\label{tab:complexity}
\vspace{4pt}
\small
\setlength{\tabcolsep}{12pt}
\begin{tabular}{lrr}
\toprule
\textbf{Model} & \textbf{Params} & \textbf{FLOPs} \\
\midrule
S-Mamba                     &  250\,K   &  5.6\,G   \\
AirPhyNet                   &  120\,K   &  0.8\,G   \\
Air-DualODE                 &  100\,K   &  2.0\,G   \\
\midrule
\textbf{\framework (Ours)}    & 550\,K & 9\,G \\
\bottomrule
\end{tabular}
\end{table}
Table~\ref{tab:complexity} compares the parameter count and FLOPs of
\framework~against representative state-space and physics-informed baselines.
\framework~remains within the same complexity regime as S-Mamba and
Air-DualODE while achieving stronger long-horizon and probabilistic
performance, demonstrating that the closed-loop physics-informed design
introduces minimal computational overhead.

\subsection{HanoiAir Dataset Details}
\label{app:hanoiair}

The HanoiAir dataset provides hourly measurements from 27 monitoring stations located across various provinces in northern Vietnam, collected by the Northern Centre for Environmental Monitoring \footnote{\url{https://cem.gov.vn/}} \footnote{\url{https://vea.mae.gov.vn/}}. The data spans from January 1, 2018, to December 31, 2025 (approx. 70,128 hours).

The dataset integrates three types of data aligned temporally and spatially:
\begin{itemize}
    \item \textbf{Air Quality Observations:} Includes target pollutants PM\textsubscript{2.5} ($\mu g/m^3$), O\textsubscript{3} ($\mu g/m^3$), PM\textsubscript{10} ($\mu g/m^3$), and key precursor gases NO\textsubscript{x} ($\mu g/m^3$), SO\textsubscript{2} ($\mu g/m^3$), CO ($\mu g/m^3$), along with NO ($\mu g/m^3$) and NO\textsubscript{2} ($\mu g/m^3$) components. The inclusion of precursor measurements (NO\textsubscript{x}, SO\textsubscript{2}, CO) is a key feature of this dataset for modeling secondary formation chemistry directly from observations.
    \item \textbf{Meteorological Variables:} Temperature ($\circ$C), Atmospheric Pressure (hPa), Solar Radiation ($W/m^2$), Relative Humidity (\%), Wind Direction ($^\circ$), Wind Speed ($m/s$).
    \item \textbf{Emission Data:} Includes hourly-downscaled emission rates for NO\textsubscript{x}, VOC, SO\textsubscript{2}, NH\textsubscript{4}, and primary PM\textsubscript{2.5} (tons/hour or similar unit, spatially allocated to station locations). 
\end{itemize}
Standard pre-processing and temporal feature engineering were applied within the \texttt{BasicTS} framework~\cite{DBLP:journals/tkde/ShaoWXWYZYSJCCJC25}. The combined dataset provides the necessary inputs (observed concentrations, meteorological drivers, and emission source terms) for the physics-informed branch of \framework to model secondary pollutant formation. Tab~\ref{tab:descriptive_statistics} provides descriptive statistics, on the physical scale, for some pollutants and meteorological variables in the HanoiAir dataset.
\begin{table}[t]
\centering
\small
\caption{Descriptive statistics}
\label{tab:descriptive_statistics}
\vspace{4pt}
\begin{tabular}{llrrrrrrrr}
\toprule
Variable & Unit & \% Missing & Q1 & Median & Mean & Q3 & SD & Max \\
\midrule
PM2.5         & $\mu$g/m$^3$ & 20.8 & 34.8   & 58.9   & 82.6   & 111.7  & 86.5  & 1177.2 \\
PM10          & $\mu$g/m$^3$ & 21.6 & 47.6   & 79.4   & 121.9  & 159.5  & 108.8 & 1356.1 \\
O3            & $\mu$g/m$^3$ & 16.3 & 27.9   & 50.6   & 58.3   & 81.4   & 38.9  & 243.1 \\
Relative hum.  & \%           & 5.7  & 68.1   & 81.0   & 78.0   & 90.2   & 14.1  & 100 \\
Pressure       & hPa          & 5.5  & 1002.5 & 1008.1 & 1008.5 & 1014.5 & 7.8   & 1036.3 \\
Solar rad.     & W/m$^2$      & 7.5  & 0.0    & 0.0    & 147.0  & 218.0  & 229.0 & 1336.1 \\
\bottomrule
\end{tabular}
\end{table}

\begin{figure}[t]
    \centering
    \small
    \begin{tikzpicture}
        \begin{axis}[
            width=0.72\linewidth,
            height=4.8cm,
            xlabel={Epoch},
            ylabel={Training Loss},
            xmin=1,
            xmax=100,
            grid=major,
            grid style={dashed, gray!30},
            axis line style={gray!80},
            tick align=outside,
            tick pos=left,
            legend style={
                at={(0.97,0.97)},
                anchor=north east,
                draw=none
            }
        ]
        \addplot[
            blue,
            thick,
            no markers
        ]
        table[
            x=epoch,
            y=train_loss,
            col sep=comma,
            header=has colnames
        ] {data/train_metrics.txt};
        \addlegendentry{Train Loss}
        \end{axis}
    \end{tikzpicture}
    \caption{The training curves on \textbf{KnowAir} datasets experimentally demonstrate that \framework achieves convergence.}
    \label{fig:training_loss}
\end{figure}

\begin{figure*}[t]
    \centering
    \begin{subfigure}[t]{0.32\textwidth}
        \centering
        \includegraphics[width=\linewidth]{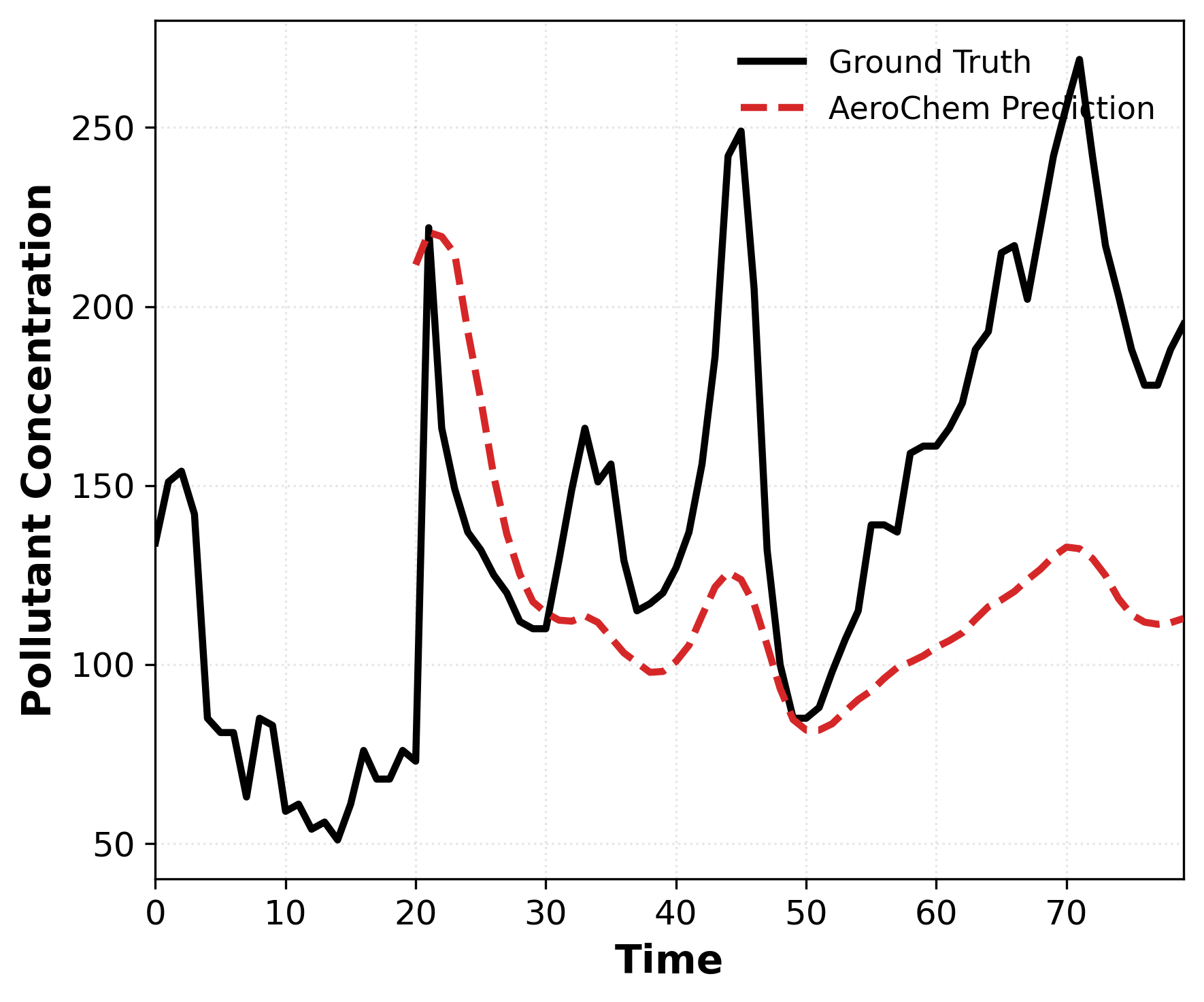}
        \label{fig:sudden_up_1}
    \end{subfigure}
    \hfill
    \begin{subfigure}[t]{0.32\textwidth}
        \centering
        \includegraphics[width=\linewidth]{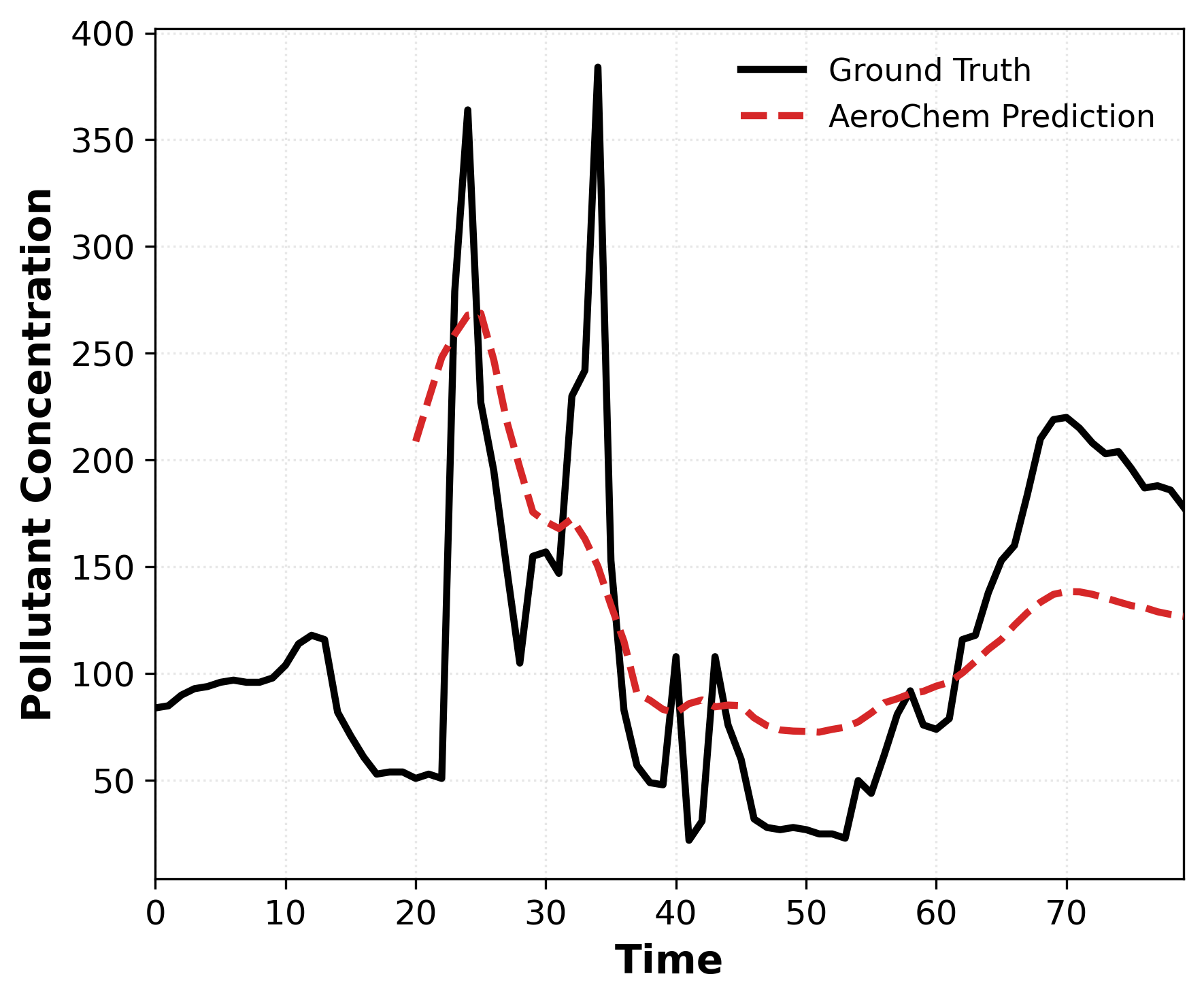}
        \label{fig:sudden_up_2}
    \end{subfigure}
    \hfill
    \begin{subfigure}[t]{0.32\textwidth}
        \centering
        \includegraphics[width=\linewidth]{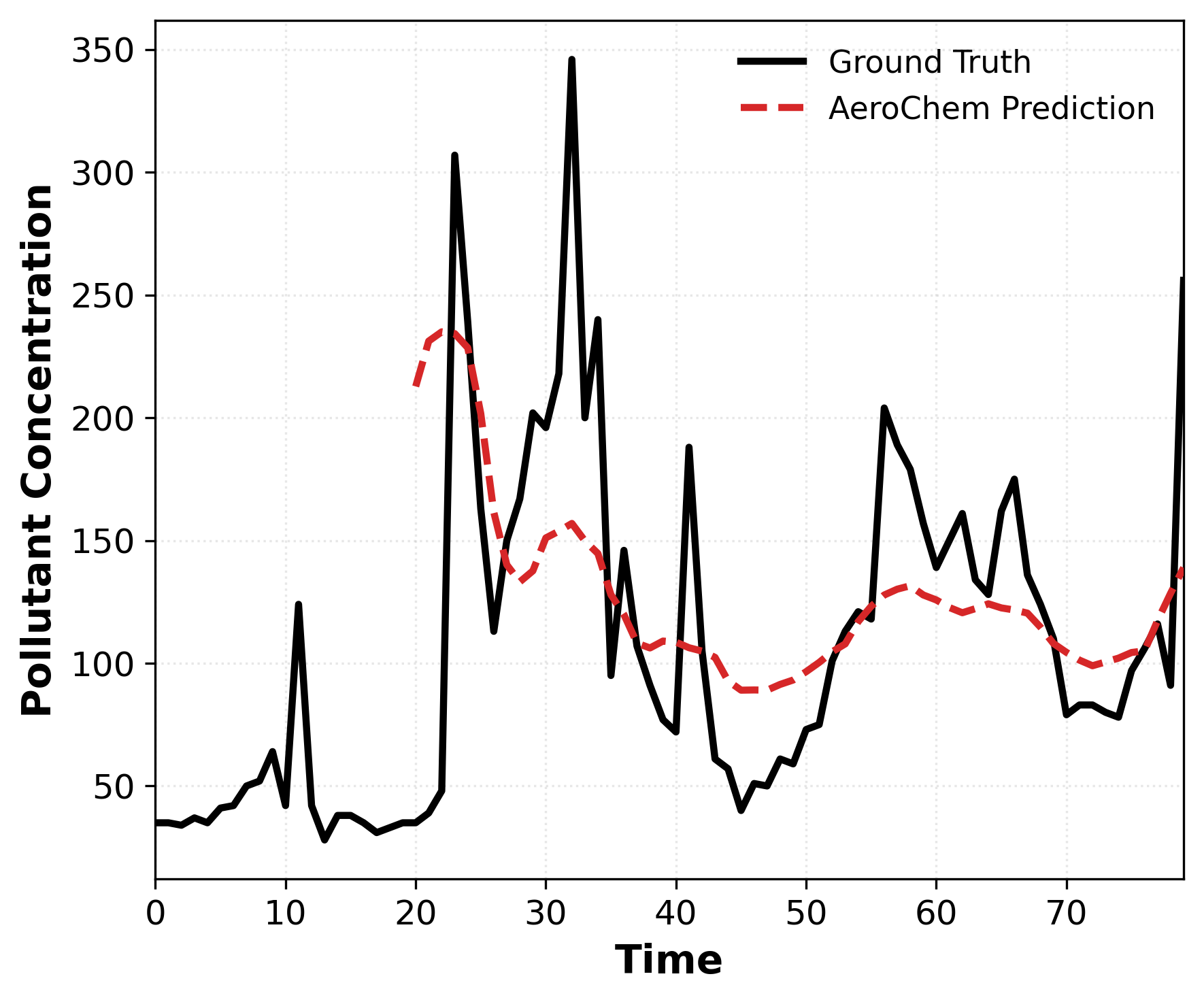}
        \label{fig:sudden_down_1}
    \end{subfigure}

    \caption{Visualization results of \framework under sudden change on Hanoi dataset.}
    \label{fig:sudden_change_vis}
\end{figure*}

\subsection{Metric Definitions}
\label{app:subsec:metrics}

Let $x = (x_1, \ldots, x_m)$ represent the ground truth, and $\hat{x} = (\hat{x}_1, \ldots, \hat{x}_m)$ represent the predicted pollutant concentrations. The evaluation metrics we used in this paper are defined as follows:

\paragraph{Mean Absolute Error ($MAE$)}
\begin{equation}
\mathrm{MAE}(x, \hat{x}) = \frac{1}{m} \sum_{i=1}^{m} \left| x_i - \hat{x}_i \right|
\end{equation}

\paragraph{Root Mean Square Error ($RMSE$)}
\begin{equation}
\mathrm{RMSE}(x, \hat{x}) =
\sqrt{
\frac{1}{m}
\sum_{i=1}^{m}
\left( x_i - \hat{x}_i \right)^2
}
\end{equation}

\paragraph{Symmetric Mean Absolute Percentage Error ($SMAPE$)}
\begin{equation}
\mathrm{SMAPE}(x, \hat{x}) =
\frac{1}{m}
\sum_{i=1}^{m}
\frac{\left| x_i - \hat{x}_i \right|}
{\frac{\left| x_i \right| + \left| \hat{x}_i \right|}{2}}
\end{equation}

\paragraph{Continuous Ranked Probability Score ($CRPS$)}
\begin{equation}
\mathrm{CRPS}(F, x_i) = \int_{-\infty}^{\infty} 
\left( F(y) - \mathbb{I}\{x_i \le y\} \right)^2 dy
\end{equation}
where $F$ is the predictive cumulative distribution function, 
$x_i$ is the $i$-th ground-truth observation, and $y$ is the 
integration variable over the real line.

\paragraph{Summed Continuous Ranked Probability Score ($CRPS_{sum}$)}
\begin{equation}
\mathrm{CRPS}_{\text{sum}} = \mathbb{E}_t \left[ 
\mathrm{CRPS}\!\left( F_{\text{sum}},\; \sum_{i=1}^{m} x_i \right) 
\right]
\end{equation}
where $F_{\text{sum}}$ is the predictive CDF of the aggregated 
output $\sum_{i=1}^{m} \hat{x}_i$ obtained through dimension-wise 
summation of samples, and $\mathbb{E}_t$ denotes the expectation 
over time steps.

\paragraph{Prediction Interval Coverage Probability ($PICP$)}
\begin{equation}
\mathrm{PICP} = \frac{1}{m} \sum_{i=1}^{m} 
\mathbb{I}\!\left\{ x_i \in \left[ 
\hat{x}_i^{\,\text{low}},\; \hat{x}_i^{\,\text{high}} 
\right] \right\}
\end{equation}
where $\hat{x}_i^{\,\text{low}}$ and $\hat{x}_i^{\,\text{high}}$ 
correspond to the $2.5^{th}$ and $97.5^{th}$ percentiles of the 
predicted distribution respectively, and $\mathbb{I}\{\cdot\}$ is 
the indicator function. Under a well-calibrated model, PICP should 
equal the nominal coverage level of $95\%$.

\begin{figure}[h]
    \centering
    \includegraphics[width=0.75\linewidth]{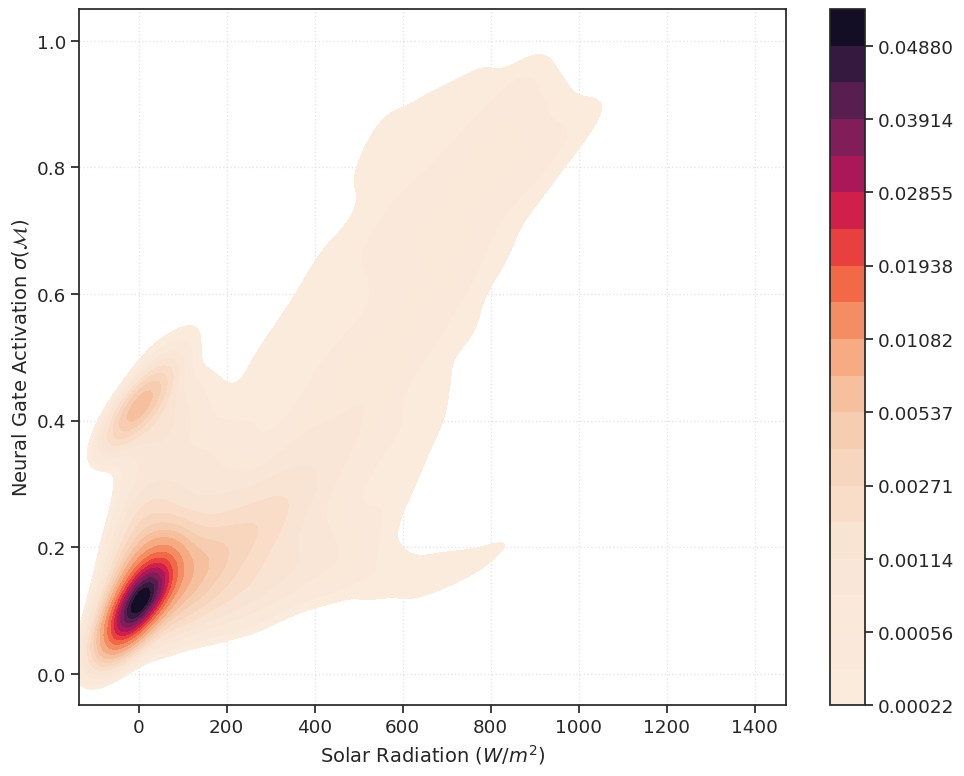}
    \caption{Kernel Density Estimation of the learned Kinetics Gate activation versus ground-truth solar radiation. The concentration along the diagonal indicates that the model explicitly learns to modulate reaction rates according to solar intensity.}
    \label{fig:kde}
\end{figure}

\begin{figure}[h]
    \centering
    \footnotesize
    \begin{tikzpicture}
        \begin{axis}[
            xlabel = Time (hours),
            ylabel = \textcolor{black}{Ozone (O\textsubscript{3}) [$\mu g/m^3$]},
            axis y line*=left,
            ymin=120, ymax=350,
            width=1.0\linewidth,
            height=0.45\linewidth,
            grid=major,
            grid style={dashed, gray!30},
            ylabel style={black, font=\bfseries},
            y tick label style={red},
            legend style={at={(0.25,1.15)}, anchor=south, draw=none, fill=none}
        ]
            \addplot[red, thick] table[x=time, y=TS_1] {data/case_study_data.txt};
            \addlegendentry{Ozone (O\textsubscript{3})}
        \end{axis}

        \begin{axis}[
            ylabel = \textcolor{black}{Precursor (NO\textsubscript{2}) [$\mu g/m^3$]},
            axis y line*=right,
            axis x line=none,
            ymin=0, ymax=60,
            width=1.0\linewidth,
            height=0.45\linewidth,
            ylabel style={black, font=\bfseries},
            y tick label style={blue},
            legend style={at={(0.75,1.15)}, anchor=south, draw=none, fill=none}
        ]
            \addplot[blue, thick] table[x=time, y=TS_2] {data/case_study_data.txt};
            \addlegendentry{Precursor (NO\textsubscript{2})}
        \end{axis}
    \end{tikzpicture}
    \caption{Photochemical dynamics: inverse correlation between O\textsubscript{3} formation and NO\textsubscript{2} depletion.}
    \label{fig:o3_no2}
\end{figure}

\begin{figure}[h]
    \centering
    \footnotesize
    \begin{tikzpicture}
        \begin{axis}[
            xlabel = Time (hours),
            ylabel = Radiation ($W/m^2$),
            ymin=0,
            width=1.0\linewidth,
            height=0.4\linewidth,
            grid=major,
            grid style={dashed, gray!30},
            legend style={at={(0.5,1.15)}, anchor=south, draw=none, fill=none},
            area style
        ]
            \addplot[
                orange, thick, fill=orange, fill opacity=0.2
            ] table[x=time, y=TS_3] {data/case_study_data.txt};
            \addlegendentry{Solar Radiation}
        \end{axis}
    \end{tikzpicture}
    \caption{Meteorological catalyst: solar radiation intensity drives the photochemical reaction.}
    \label{fig:radiation}
\end{figure}

\begin{figure}[h]
    \centering
    \begin{subfigure}{0.48\linewidth}
        \centering
        \includegraphics[width=\linewidth]{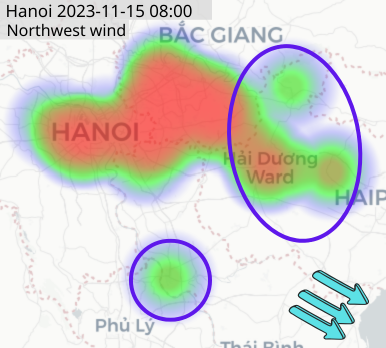}
    \end{subfigure}
    \hfill
    \begin{subfigure}{0.48\linewidth}
        \centering
        \includegraphics[width=\linewidth]{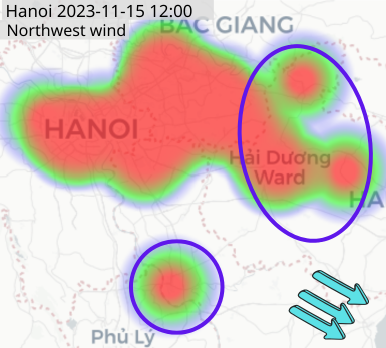}
    \end{subfigure}
    \caption{Visualization of predicted PM\textsubscript{2.5} concentrations under advection. The heatmap shows PM\textsubscript{2.5} concentration, while arrows represent wind direction.}
    \label{fig:pm25_advection}
\end{figure}

\subsection{Hyperparameter Settings.}
\label{app:subsec:hyperparams}
We train \framework using AdamW~\cite{DBLP:conf/iclr/LoshchilovH19} optimizer with a learning rate of 0.00001 and a batch size of 64. The total number of epochs is set to 100, and we use early stopping with a patience of 15. 

\subsection{Implementation Details.}
\label{app:subsec:implementation}
We implement \framework and baselines using Python 3.12, Numpy 2.3.3, SciPy 1.15.3, Sklearn 1.7.2, and Pytorch 2.8.0. We run all experiments on a cluster server, which runs Ubuntu Linux 22.04 LTS. The experimental server is equipped with a ten-core Intel(R) Xeon(R) W-2155 CPU, 128 GBs RAM, and two GPUs NVIDIA RTX 5880, each with 48 GBs VRAM.

\subsection{Probabilistic forecasting performance}
\label{app:subsec:probabilistic}

\begin{table*}[t]
\centering
\setlength{\tabcolsep}{5.0pt}
\caption{Probabilistic forecasting performance comparison on \textbf{KnowAir}. The best results are highlighted in \textbf{bold}. Mean $\pm$ std over 5 runs.}
\label{tab:prob_results}
\resizebox{\textwidth}{!}{ 
\begin{tabular}{l|ccc|ccc|ccc}
\toprule
\multirow{2}{*}{Methods} 
& \multicolumn{3}{c|}{3 Days} 
& \multicolumn{3}{c|}{6 Days} 
& \multicolumn{3}{c}{9 Days} \\
\cmidrule(lr){2-10}
& CRPS & CRPS$_{\text{sum}}$ & PICP
& CRPS & CRPS$_{\text{sum}}$ & PICP
& CRPS & CRPS$_{\text{sum}}$ & PICP \\
\midrule
\texttt{VAR} 
& 26.14$\pm$0.05 & 4582.7$\pm$3.2 & 0.974$\pm$0.001
& 28.07$\pm$0.06 & 4905.4$\pm$4.1 & 0.978$\pm$0.001
& 30.42$\pm$0.08 & 5306.8$\pm$5.7 & 0.981$\pm$0.001 \\
\texttt{LatentODE} 
& 21.38$\pm$0.18 & 3782.6$\pm$22.4 & 0.886$\pm$0.008
& 22.95$\pm$0.24 & 4054.1$\pm$28.7 & 0.871$\pm$0.011
& 24.71$\pm$0.31 & 4365.3$\pm$35.2 & 0.852$\pm$0.013 \\
\texttt{DLinear} 
& 17.74$\pm$0.10 & 3144.9$\pm$7.3 & 0.938$\pm$0.002
& 17.96$\pm$0.11 & 3207.2$\pm$5.3 & 0.933$\pm$0.005
& 18.03$\pm$0.04 & 3208.4$\pm$9.6 & 0.930$\pm$0.003 \\
\texttt{PatchTST} 
& 18.23$\pm$0.04 & 3216.8$\pm$9.4 & 0.939$\pm$0.002
& 18.50$\pm$0.09 & 3291.9$\pm$13.9 & 0.935$\pm$0.001
& 18.66$\pm$0.14 & 3293.2$\pm$19.5 & 0.933$\pm$0.004 \\
\texttt{TimeMoE} 
& 16.15$\pm$0.07 & 2785.3$\pm$8.5 & \textbf{0.951$\pm$0.002}
& 16.38$\pm$0.09 & 2842.1$\pm$11.2 & \textbf{0.947$\pm$0.003}
& 16.52$\pm$0.11 & 2889.6$\pm$14.6 & \textbf{0.944$\pm$0.004} \\
\midrule
\textbf{AeroChem (Ours)} 
& \textbf{15.05$\pm$0.03} & \textbf{2371.5$\pm$4.8} & 0.912$\pm$0.002
& \textbf{14.92$\pm$0.11} & \textbf{2312.0$\pm$7.7} & 0.922$\pm$0.002
& \textbf{15.28$\pm$0.24} & \textbf{2355.5$\pm$10.8} & 0.919$\pm$0.007 \\
\bottomrule
\end{tabular}
}
\vspace{-15pt}
\end{table*}
Table~\ref{tab:prob_results} reports probabilistic forecasting results on \textbf{KnowAir} PM\textsubscript{2.5}. For point-forecasting baselines, we attach a post-hoc Gaussian predictive layer whose variance is fitted from validation residuals, while \texttt{VAR} uses closed-form Gaussian uncertainty propagation. Under this unified protocol, \framework achieves the lowest CRPS and CRPS$_{\text{sum}}$ across horizons, indicating sharper probabilistic forecasts. Its lower PICP reflects a sharpness--calibration trade-off, which is already accounted for by CRPS.


\subsection{Numerical Stability}
\label{app:sec:numerical stability}
Given that \framework utilizes an ODE solver to model continuous dynamics and incorporates complex spatiotemporal dependencies, it is necessary to verify the model's numerical stability. The training loss curve over 100 epochs (see Fig.~\ref{fig:training_loss}) demonstrates that the model achieves stable convergence. Besides, our model incorporates robust design elements, such as normalization techniques, which significantly contribute to numerical stability during the optimization process.

\subsection{Effect of Sudden Changes}
\label{app:sec:sudden_change}
Fig.~\ref{fig:sudden_change_vis} visualizes specific sudden change episodes in the HanoiAir dataset. While \framework{} correctly identifies the directional shifts, it occasionally undershoots the absolute peak magnitude. This observation aligns with the quantitative analysis in Table~\ref{tab:hanoi_results}: although our physics-informed approach outperforms baselines in minimizing anomaly errors, perfectly reconstructing the full amplitude of extreme events remains an inherent challenge.



\subsection{Case study}
\label{app:subsec:case_study}
\paragraph{Meteorology-Gated Neural Reaction.}
To validate the interpretability and robustness of \framework, we go beyond standard metrics to analyze the internal mechanisms of the model. We specifically examine two critical aspects: the learning of chemical kinetics and the stability of long-term forecasting.

We first investigate the global behavior of the learned kinetics gate $\sigma(\mathcal{M}_t)$ across the entire test set. Fig.~\ref{fig:kde} presents the Kernel Density Estimation (KDE) of the gate activation values against ground-truth solar radiation.
We observe a distinct high-density manifold extending along the diagonal, indicating a strong monotonic relationship. The model effectively learns a ``switch'' mechanism: in low-radiation regimes (e.g., night or overcast), the gate remains closed ($\sigma \approx 0$), suppressing spurious reaction terms. Conversely, high radiation triggers gate activation ($\sigma \to 1$). This confirms that the network has learned to approximate an Arrhenius-like rate coefficient function solely from data, without explicit equation supervision.

To examine the dynamic impact of this gating mechanism, Fig.~\ref{fig:o3_no2},~\ref{fig:radiation}
visualizes a specific high-ozone episode. The temporal dynamics reveal a coherent photochemical process:
\begin{enumerate}
    \item \textit{Triggering:} As solar radiation peaks at noon (Bottom Panel), the Neural Gate synchronously opens, allowing the reaction term to contribute to the UDE drift.
    \item \textit{Consumption:} Coinciding with the gate opening, we observe a depletion in precursor NO\textsubscript{2} concentrations (Middle Panel), consistent with the photolysis reaction NO\textsubscript{2} + $h\nu \to$ NO + O.
    \item \textit{Formation:} Consequently, the predicted O\textsubscript{3} concentration (Top Panel) rises sharply, accurately capturing the diurnal peak that baseline models often miss.
\end{enumerate}
This causal chain demonstrates that \framework does not merely memorize statistical correlations but simulates the underlying chemical kinetics, ensuring physical consistency in secondary pollutant forecasting.

\paragraph{Diffusion and Advection.}
To validate the physical fidelity of the UDE Process Model, Fig.~\ref{fig:pm25_advection} visualizes the spatial evolution of the predicted PM\textsubscript{2.5} field alongside the dominant wind vectors. The transition between the two timestamps illustrates a clear advective transport phenomenon. Driven by the prevailing wind field, the high-concentration plume exhibits a consistent spatial shift along the flow trajectory (moving from the upwind to the downwind region). This directional alignment confirms that \framework successfully internalizes the explicit Advection equation to govern pollutant propagation, rather than merely treating wind as a static auxiliary feature.

\subsection{Sensitivity and Data Fidelity}
\label{app:subsec:more_exp}

\paragraph{Ablation on Temporal Downscaling of Emissions}
We further investigate the impact of temporal downscaling for emission inputs in Table \ref{tab:downscaling}. Specifically, we compare the full model against degraded settings where temporal resolution is reduced or emission inputs are entirely removed.
\begin{table}[h]
\centering
\footnotesize
\caption{Ablation study on temporal downscaling of emission data (HanoiAir).}
\label{tab:downscaling}
\vspace{4pt}
\begin{tabular}{l|cc}
\toprule
\textbf{Emission Input} & \textbf{3-Day MAE} & \textbf{9-Day MAE} \\
\midrule
Full & \textbf{42.74} & \textbf{46.80} \\
Degraded (no downscaling) & 43.31 & 47.03 \\
Removed emissions & 43.86 & 48.15 \\
\bottomrule
\end{tabular}
\end{table}

The results indicate that temporal downscaling plays a consistent role in improving predictive accuracy, particularly for long-horizon forecasts.
\vspace{-5pt}

\paragraph{O$_3$ Prediction}
To complement the PM$_{2.5}$ evaluation, we report additional results on O$_3$ forecasting in Table \ref{tab:o3_results}.
\begin{table}[t]
\centering
\footnotesize
\caption{Performance comparison on O$_3$ prediction (9-day horizon), including runtime.}
\label{tab:o3_results}
\vspace{4pt}
\begin{tabular}{l|ccc|cc}
\toprule
\multirow{2}{*}{\textbf{Methods}} 
& \multicolumn{3}{c|}{\textbf{9 Days}} 
& \multicolumn{2}{c}{\textbf{Runtime}} \\
\cmidrule(lr){2-6}
& MAE & RMSE & SMAPE & Training & Inference \\
\midrule
\texttt{DLinear}   & 29.07 & 38.92 & 0.49 & 5h  & 3m \\
\texttt{TimeMoE}   & 28.83 & 38.64 & 0.52 & 30h & 5m \\
\texttt{PatchTST}  & 28.97 & 38.06 & 0.50 & 10h & 5m \\
\texttt{AirDualODE} & 30.58 & 38.80 & 0.55 & 15h & 3m \\
\midrule
\textbf{\framework (Ours)} & \textbf{28.53} & \textbf{38.12} & 0.53 & 20h & 3m \\
\bottomrule
\end{tabular}
\end{table}
The results demonstrate that \framework generalizes effectively to other chemically reactive secondary pollutants. Under the 9-day horizon, \framework outperforms data-driven models. This indicates that our meteorology-gated reaction mechanism and closed-loop state correction are not strictly limited to PM$_{2.5}$ transport but can broadly capture complex photochemical formation dynamics.
\vspace{-5pt}

\paragraph{Comparison with Chemistry-Aware Methods}
We further compare against recent approaches that incorporate chemical transformations or physics-inspired modeling in Table \ref{tab:chemistry}.
\begin{table*}[t]
\centering
\footnotesize
\setlength{\tabcolsep}{1.5pt}
\caption{Comparison with chemistry-aware baselines.}
\label{tab:chemistry}
\begin{tabular}{l|ccc|ccc|ccc|ccc|cc}
\toprule
\multirow{2}{*}{\textbf{Methods}} 
& \multicolumn{3}{c|}{3 Days} 
& \multicolumn{3}{c|}{6 Days} 
& \multicolumn{3}{c|}{9 Days} 
& \multicolumn{3}{c|}{Sudden Change} 
& \multicolumn{2}{c}{Runtime} \\
\cmidrule(lr){2-15}
& MAE & RMSE & SMAPE 
& MAE & RMSE & SMAPE 
& MAE & RMSE & SMAPE 
& MAE & RMSE & SMAPE 
& Train & Infer \\
\midrule
\texttt{MDSTNet \cite{DBLP:journals/corr/abs-2504-10014}} & 23.27 & 34.98 & 0.51 & 23.52 & 34.98 & 0.53 & 24.77 & 35.35 & 0.55 & 34.12 & 48.22 & 0.51 & 72h & 5m \\
\texttt{FuXi-Air \cite{DBLP:journals/corr/abs-2506-07616}} & 27.90 & 39.23 & 0.55 & 28.56 & 41.97 & 0.66 & 29.28 & 43.74 & 0.67 & 36.56 & 50.43 & 0.55 & - & 3m \\
\midrule
\textbf{\framework (Ours)} & \textbf{21.95} & \textbf{37.20} & \textbf{0.50} & \textbf{23.00} & \textbf{38.80} & \textbf{0.52} & \textbf{24.08} & \textbf{39.34} & \textbf{0.54} & \textbf{34.50} & \textbf{49.50} & \textbf{0.52} & 32h & 2m \\
\bottomrule
\end{tabular}
\end{table*}
\framework achieves superior predictive performance across all forecasting scenarios, with the performance gap widening significantly at the 9-day horizon and during sudden changes. It shows that the integration of UDEs and Neural Kalman Fusion provides a more robust physical constraint against long-term drift. Furthermore, our model maintains a highly competitive computational footprint during both training and inference.
\vspace{-5pt}

\paragraph{Effect of future meteorological inputs}
We analyze the contribution of future meteorological inputs across different models in Table \ref{tab:meteo}.
\begin{table}[!htbp]
\centering
\footnotesize
\caption{Effect of future meteorological inputs.}
\label{tab:meteo}
\vspace{4pt}
\begin{tabular}{l|c|cc}
\toprule
\textbf{Model} & \textbf{Future Meteo} & \textbf{3-Day MAE} & \textbf{9-Day MAE} \\
\midrule
DLinear   & No  & 22.57 & 24.50 \\
DLinear   & Yes & 22.31 & 24.12 \\
PatchTST  & No  & 22.69 & 25.25 \\
PatchTST  & Yes & 22.45 & 24.78 \\
TimeMoE   & No  & 21.68 & 25.32 \\
TimeMoE   & Yes & 21.42 & 24.81 \\
\midrule
\framework  & Yes & 21.95 & \textbf{24.08} \\
\framework  & Yes ($\mathbf{E}_t = \mathbf{0}$) & 22.28 & 24.92 \\
\bottomrule
\end{tabular}
\end{table}
We analyze the contribution of future meteorological inputs across different models in Table \ref{tab:meteo}. While integrating future meteorology uniformly improves the performance of direct-mapping baselines, \framework still consistently achieves the lowest overall error. More importantly, we evaluate a degraded setting for our model where future emission inventories are completely masked ($E_{t>0}=0$). Even in this strict zero-future-emission scenario, AeroChem remains a competitive result to baselines that have full access to future meteorology. This underscores the resilience of our closed-loop latent correction mechanism in mitigating the impact of missing external drivers.
\vspace{-5pt}

\paragraph{Effect of closed-loop neural state correction}
We isolate the contribution of closed-loop neural state correction mechanisms in Table \ref{tab:da}.
\begin{table}[!htbp]
\centering
\footnotesize
\caption{Effect of closed-loop neural state configurations.}
\label{tab:da}
\vspace{4pt}
\begin{tabular}{l|c}
\toprule
\textbf{Configuration} & \textbf{KnowAir 9-Day MAE} \\
\midrule
Open-loop UDE (\framework-K) & 35.50 \\
Standard Kalman (Gaussian innovation) & 24.44 \\
Neural Kalman (learned correction) & \textbf{24.08} \\
\bottomrule
\end{tabular}
\end{table}
These results demonstrate that the neural Kalman formulation provides consistent gains over classical data corrections, highlighting the importance of learned correction mechanisms.
\vspace{-5pt}

\subsection{Statistical significance}
\label{app:subsec:statistic}

Regarding statistical significance, we report paired tests over matched forecast windows where possible, including Welch's $t$-tests ~\cite{10.1093/biomet/34.1-2.28} over five independent runs against the strongest baseline in each setting. Tab~\ref{tab:statistical_significance} reports unadjusted two-sided $p$-values.

\begin{table}[t]
\centering
\caption{Statistical significance tests comparing \framework against the strongest baseline in each setting. Values are reported as mean $\pm$ standard deviation. Welch's $t$-test is conducted over five independent runs. The
reported $p$-values are unadjusted two-sided values.}
\label{tab:statistical_significance}
\vspace{4pt}
\small
\setlength{\tabcolsep}{4pt}
\begin{tabular}{lcccc}
\toprule
\textbf{Setting (MAE)} &
\textbf{AeroChem} &
\textbf{Best baseline} &
\textbf{Welch $t$ (df)} &
\textbf{$p$ (2-sided)} \\
\midrule
KnowAir 3d
& $21.95 \pm 0.06$
& Fredformer $21.45 \pm 0.08$
& $11.2$ ($7.4$)
& $7\times10^{-6}$ \\

KnowAir 6d
& $23.00 \pm 0.08$
& TimeMoE $23.15 \pm 0.11$
& $2.5$ ($7.3$)
& $0.042$ \\

KnowAir 9d
& $24.08 \pm 0.10$
& DLinear $24.50 \pm 0.12$
& $6.0$ ($7.7$)
& $4\times10^{-4}$ \\

KnowAir Sudden
& $34.50 \pm 0.18$
& TimeMoE $36.54 \pm 0.21$
& $16.5$ ($7.8$)
& $2\times10^{-7}$ \\

HanoiAir 3d
& $42.74 \pm 0.10$
& Fredformer $41.60 \pm 0.16$
& $13.5$ ($6.7$)
& $4\times10^{-6}$ \\

HanoiAir 6d
& $44.50 \pm 0.13$
& TimeMoE $44.15 \pm 0.21$
& $3.2$ ($6.7$)
& $0.017$ \\

HanoiAir 9d
& $46.80 \pm 0.16$
& DLinear $47.10 \pm 0.22$
& $2.5$ ($7.3$)
& $0.042$ \\

HanoiAir Sudden
& $68.80 \pm 0.28$
& TimeMoE $71.54 \pm 0.38$
& $13.0$ ($7.4$)
& $2\times10^{-6}$ \\
\bottomrule
\end{tabular}
\end{table}

These results make the empirical picture more precise. At 3 days, the strongest Transformer-based baselines are significantly better on MAE in both datasets.
At some mid-horizon settings, the differences are small and should be interpreted cautiously, especially after multiple-comparison correction. However, in the target regimes of this work, longer horizons and sudden
pollution changes, AeroChem shows clearer gains. We will revise the manuscript to emphasize this crossover behavior rather than claiming uniform superiority across all settings.

\subsection{Normalized Performance Comparison}
\label{app:subsec:normalized}

We added normalized metrics:
$$ \mathrm{nMAE} = \frac{\mathrm{MAE}}{\bar{y}_{\mathrm{test}}},
\qquad
\mathrm{nRMSE} = \frac{\mathrm{RMSE}}{\bar{y}_{\mathrm{test}}},
$$
where $\bar{y}_{\mathrm{test}}$ is the test-period mean concentration on the original physical scale. This value will be stated in the caption and in the dataset-statistics table. Since normalization by a common dataset mean preserves between-model ratios, the relative improvements can be derived from the submitted tables. We report them in Tab~\ref{tab:normalized_performance}, where negative values favor \framework and the strongest baseline is named for each cell.
\begin{table}[t]
\centering
\small
\caption{Relative changes in normalized MAE and normalized RMSE of \framework with respect to the strongest baseline in each setting.}
\label{tab:normalized_performance}
\vspace{4pt}
\begin{tabular}{lcc}
\toprule
\textbf{Setting}
& \textbf{$\Delta$ nMAE vs. best baseline}
& \textbf{$\Delta$ nRMSE vs. best baseline} \\
\midrule
KnowAir 3d
& $+2.33\%$ (Fredformer)
& $0.00\%$ (tie, TimeMoE) \\
KnowAir 6d
& $-0.65\%$ (TimeMoE)
& $-0.39\%$ (TimeMoE) \\
KnowAir 9d
& $-1.71\%$ (DLinear)
& $-2.02\%$ (TimeMoE) \\
KnowAir Sudden
& $-5.58\%$ (TimeMoE)
& $+0.77\%$ (TimeMoE) \\
HanoiAir 3d
& $+2.74\%$ (Fredformer)
& $-1.08\%$ (TimeMoE) \\
HanoiAir 6d
& $+0.79\%$ (TimeMoE)
& $-0.90\%$ (S-Mamba) \\
HanoiAir 9d
& $-0.64\%$ (DLinear)
& $-1.17\%$ (TimeMoE) \\
HanoiAir Sudden
& $-3.83\%$ (TimeMoE)
& $-2.10\%$ (TimeMoE) \\
\bottomrule
\end{tabular}
\end{table}
The normalized metrics support the same conclusion: \framework trails by $2.3$ to $2.7\%$ nMAE at 3d, but improves nMAE by up to $5.58\%$ at long horizons and sudden events. On HanoiAir, \framework achieves lower nRMSE in all settings, consistent with reduced error accumulation.

\clearpage
\newpage

\section*{NeurIPS Paper Checklist}

\begin{enumerate}

\item {\bf Claims}
\item[] Answer: \answerYes{}
\item[] Justification: The abstract and introduction accurately reflect the paper’s main contributions, including the AeroChem framework, the HanoiAir dataset, and the empirical improvements in forecasting performance and uncertainty estimation.

\item {\bf Limitations}
\item[] Answer: \answerYes{}
\item[] Justification: The paper explicitly discusses limitations, including simplified atmospheric chemistry assumptions, approximation errors in uncertainty modeling, and dependence on external meteorological and emission data sources.

\item {\bf Theory assumptions and proofs}
\item[] Answer: \answerYes{}
\item[] Justification: The paper presents formal assumptions, a theoretical result, and provides the corresponding proof in the supplementary appendix.

\item {\bf Experimental result reproducibility}
\item[] Answer: \answerYes{}
\item[] Justification: The paper provides sufficient methodological detail, including architecture design, dataset construction, training objectives, evaluation metrics, and implementation details to reproduce the main experiments.

\item {\bf Open access to data and code}
\item[] Answer: \answerYes{}
\item[] Justification: The source code is released via an anonymized public repository, and sufficient instructions for reproducing the experiments are provided.

\item {\bf Experimental setting/details}
\item[] Answer: \answerYes{}
\item[] Justification: The paper specifies dataset splits, optimizer settings, hyperparameters, forecasting horizons, and evaluation protocols in both the main paper and supplementary material.

\item {\bf Experiment statistical significance}
\item[] Answer: \answerYes{}
\item[] Justification: Experimental results are reported with mean and standard deviation across multiple runs, providing appropriate variance estimates for the reported performance.

\item {\bf Experiments compute resources}
\item[] Answer: \answerYes{}
\item[] Justification: The paper provides computational complexity comparisons (parameter count and FLOPs) and experimental resource configurations sufficient to understand computational requirements.

\item {\bf Code of ethics}
\item[] Answer: \answerYes{}
\item[] Justification: The research complies with the NeurIPS Code of Ethics and uses environmental monitoring data without involving harmful deployment scenarios or human subjects.

\item {\bf Broader impacts}
\item[] Answer: \answerYes{}
\item[] Justification: The paper includes a dedicated Broader Impact section discussing societal benefits such as air pollution forecasting, environmental policy support, and public health planning, along with potential deployment considerations.

\item {\bf Safeguards}
\item[] Answer: \answerNA{}
\item[] Justification: The proposed model and dataset do not involve high-risk generative models or sensitive data that would require additional safeguards.

\item {\bf Licenses for existing assets}
\item[] Answer: \answerYes{}
\item[] Justification: All external datasets, benchmark methods, and prior works are properly cited and credited.

\item {\bf New assets}
\item[] Answer: \answerYes{}
\item[] Justification: The paper introduces the HanoiAir dataset and provides sufficient documentation on data sources, preprocessing, and variable construction.

\item {\bf Crowdsourcing and research with human subjects}
\item[] Answer: \answerNA{}
\item[] Justification: The work does not involve crowdsourcing or human-subject experiments.

\item {\bf Institutional review board (IRB) approvals or equivalent for research with human subjects}
\item[] Answer: \answerNA{}
\item[] Justification: The work does not involve human-subject research and therefore does not require IRB approval.

\item {\bf Declaration of LLM usage}
\item[] Answer: \answerNA{}
\item[] Justification: Large language models are not used as part of the core methodology, model design, or experimental pipeline of this work.

\end{enumerate}

\end{document}